\DeclareMathAlphabet{\mathpzc}{OT1}{pzc}{m}{it}
\newtheorem{propo}{Proposition}[section]
\newtheorem{lemma}[propo]{Lemma}
\newtheorem{thm}[propo]{Theorem}
\newtheorem{remark}[propo]{Remark}
\def\tF{\widetilde{F}}
\def\bL{\overline{L}}
\def\cA{{\cal A}}
\def\cH{{\cal H}}
\def\cC{{\cal C}}
\def\cG{{\cal G}}
\def\cE{{\cal E}}
\def\T{{\sf T}}
\def\eps{\epsilon}
\def\tB{\widetilde{B}}
\def\bL{\overline{L}}
\def\reals{{\mathds R}}
\def\tM{\widetilde{M}}
\def\tMij{\widetilde{M}^E_{ij}}
\def\Mmax{M_{\rm max}}
\def\Smax{\Sigma_{\rm max}}
\def\Smin{\Sigma_{\rm min}}
\def\diag{{\rm diag}}
\def\spn{{\rm span}}
\def\prob{{\mathbb P}}
\def\P{{\mathbb P}}
\def\Z{\mathbb Z}
\def\E{\mathbb E}
\def\dim{{\rm dim}}
\def\ind{{\mathbb I}}
\def\<{\langle}
\def\>{\rangle}
\def\diag{{\rm diag}}
\def\Grass{{\sf G}}
\def\Orth{{\sf O}}
\def\Manif{{\sf M}}
\def\id{{\mathbf 1}}
\def\cP{{\cal P}}
\def\cF{{\cal F}}
\def\Xm{{\bf x}}
\def\Wm{{\bf w}}
\def\Um{{\bf u}}
\def\Whm{\widehat{\bf w}}
\def\eps{\epsilon}
\def\Tang{{\sf T}}
\def\Trace{{\rm Tr}}
\def\hM{\widehat{M}}
\def\hA{\widehat{A}}
\def\hB{\widehat{B}}
\def\hW{\widehat{W}}
\def\hZ{\widehat{Z}}
\def\oW{\overline{W}}
\def\oZ{\overline{Z}}
\def\E{{\mathbb E}}
\def\grad{{\rm grad}\, }
\def\Co{{\cal K}}
\begin{document}

\title{Matrix Completion from a Few Entries}

\author{Raghunandan~H.~Keshavan\thanks{Department of  Electrical Engineering, Stanford University},
        Andrea~Montanari${}^*$\thanks{Departments of Statistics, Stanford University},
        and~Sewoong~Oh${}^*$}

\maketitle
%
%
\begin{abstract}
Let $M$ be an $n\alpha\times n$ matrix of rank $r\ll n$, 
and assume that a uniformly random subset $E$ 
of its entries is observed. 
We describe an efficient algorithm that 
reconstructs $M$ from $|E| = O(r\,n)$ observed entries
with relative root mean square error 
\begin{eqnarray*}
{\rm RMSE} \le C(\alpha)\, \left(\frac{nr}{|E|}\right)^{1/2}\, .
\end{eqnarray*} 
Further, if  $r = O(1)$ and $M$ is sufficiently unstructured, then it can be reconstructed 
\emph{exactly}  from $|E| = O(n\log n)$ entries.

This settles (in the case of bounded rank)
a question left open by Cand{\`e}s and Recht  and improves
over the guarantees for their reconstruction algorithm.
The complexity of our algorithm is $O(|E|r\log n)$, which opens the way to
its use for massive  data sets. In the process of proving these statements,
we obtain a generalization of a celebrated result by
Friedman-Kahn-Szemer\'edi and Feige-Ofek on the spectrum of sparse random 
matrices.
\end{abstract}

%
%
\section{Introduction}

Imagine that each of $m$ customers watches and rates a subset 
of the $n$ movies available through a movie rental service.
This yields a dataset of customer-movie pairs $(i,j)\in 
E\subseteq [m]\times [n]$ and, for each such pair, a rating $M_{ij}\in \reals$.
The objective of \emph{collaborative filtering} is to predict 
the rating for the missing pairs in such a way as to provide targeted 
suggestions.\footnote{Indeed, in 2006, {\sc Netflix} made public such
a dataset with $m\approx 5\cdot 10^5$, $n\approx 2\cdot 10^4$ and
$|E|\approx 10^{8}$ and challenged the research community
to predict the missing ratings with root mean square error below
$0.8563$ \cite{Net06}.} The general question we address here is:
Under which conditions do the known ratings provide sufficient
information to infer the unknown ones?
Can this inference problem be solved efficiently?
The second question is particularly important in view of the 
massive size of actual data sets.
%
%
\subsection{Model definition}\label{sec:model}

A simple mathematical model for such 
data assumes that the (unknown) matrix of ratings has rank $r\ll m,n$.
More precisely, we denote by $M$ the matrix whose entry $(i,j)\in [m]\times
[n]$ corresponds to the rating user $i$ would assign to movie $j$.
We assume that there exist matrices $U$, of dimensions  $m \times r$, and 
$V$, of dimensions $n\times r$, and a diagonal matrix $\Sigma$, 
of dimensions $r\times r$ such that
\begin{eqnarray}
  M = U\Sigma V^T\, .\label{eq:MatrixForm}
\end{eqnarray}
For justification of these assumptions and background on the use of low rank
matrices in information retrieval, we refer to 
\cite{BDJ99}.
Since we are interested in very large data sets, we shall focus on the limit 
$m, n\to \infty$ with $m/n=  \alpha$ bounded away from $0$ and $\infty$.

We further assume that the factors $U$, $V$ are unstructured.
This notion is formalized by the \emph{incoherence condition} 
introduced by Cand\'es and Recht \cite{CaR08}, 
and defined in Section \ref{sec:coherence}. In particular the incoherence
condition is satisfied with high probability if  $M= U\Sigma V^T$ with 
$U$ and $V$ uniformly random matrices with $U^TU=m\id$ and $V^TV=n\id$.
Alternatively, incoherence holds 
if the entries of $U$ and $V$ are i.i.d. bounded
random variables.

Out of the $m\times n$ entries of $M$, a subset 
$E\subseteq[m]\times [n]$ (the user/movie pairs for which
a rating is available) is revealed.
We let  $M^E$ be the $m\times n$ matrix that contains the
revealed entries of $M$, and is
filled with $0$'s in the other positions
 \begin{eqnarray}
  M^E_{i,j} = \left\{
              \begin{array}{rl}
              M_{i,j} & \text{if } (i,j)\in E\, ,\\
              0       & \text{otherwise.}
              \end{array} \right. \label{eq:RevealedMatrixForm}
  \end{eqnarray}
The set $E$ will be uniformly  random given its 
size $|E|$.

%
%
\subsection{Algorithm}\label{sec:algorithm}

A naive algorithm consists of the following projection operation.

\vspace{0.05cm}

\noindent{\bf Projection.} Compute the singular value decomposition (SVD)
of $M^E$ (with $\sigma_1\ge\sigma_2\ge\cdots\ge 0$) 
\begin{equation}
M^E = \sum_{i=1}^{\min(m,n)}\sigma_i  x_iy_i^T\, ,\label{eq:WrongSVD}
\end{equation}
and return the matrix $\T_r(M^E) =  
(mn/|E|)\sum_{i=1}^{r}\sigma_i  x_iy_i^T$
obtained by setting to $0$ all but the $r$ largest singular values.
Notice that, apart from the rescaling factor
$(mn/|E|)$, $\T_{r}(M^E)$ is the orthogonal projection of $M^E$
onto the set of rank-$r$ matrices. The rescaling factor compensates  
the smaller average size of the entries of $M^E$ with respect to $M$.

\vspace{0.1cm}

It turns out that, if $|E|=\Theta(n)$, this algorithm performs
very poorly. The reason is that the matrix $M^E$ 
contains columns and rows with $\Theta(\log n/\log\log n)$ non-zero
(revealed) entries. 
The largest singular values of $M^E$ 
are of order $\Theta(\sqrt{\log n/\log\log n})$. The corresponding
singular vectors are highly concentrated on high-weight
column or row indices (respectively, for left and right singular vectors).
Such singular vectors are 
an artifact of the high-weight columns/rows and do not provide 
useful information about the hidden entries of $M$. 
This motivates the definition of
the following operation (hereafter the \emph{degree} of a
column or of a row is the number of its revealed entries).

\vspace{0.1cm}
\begin{figure}
\includegraphics[width=8.cm]{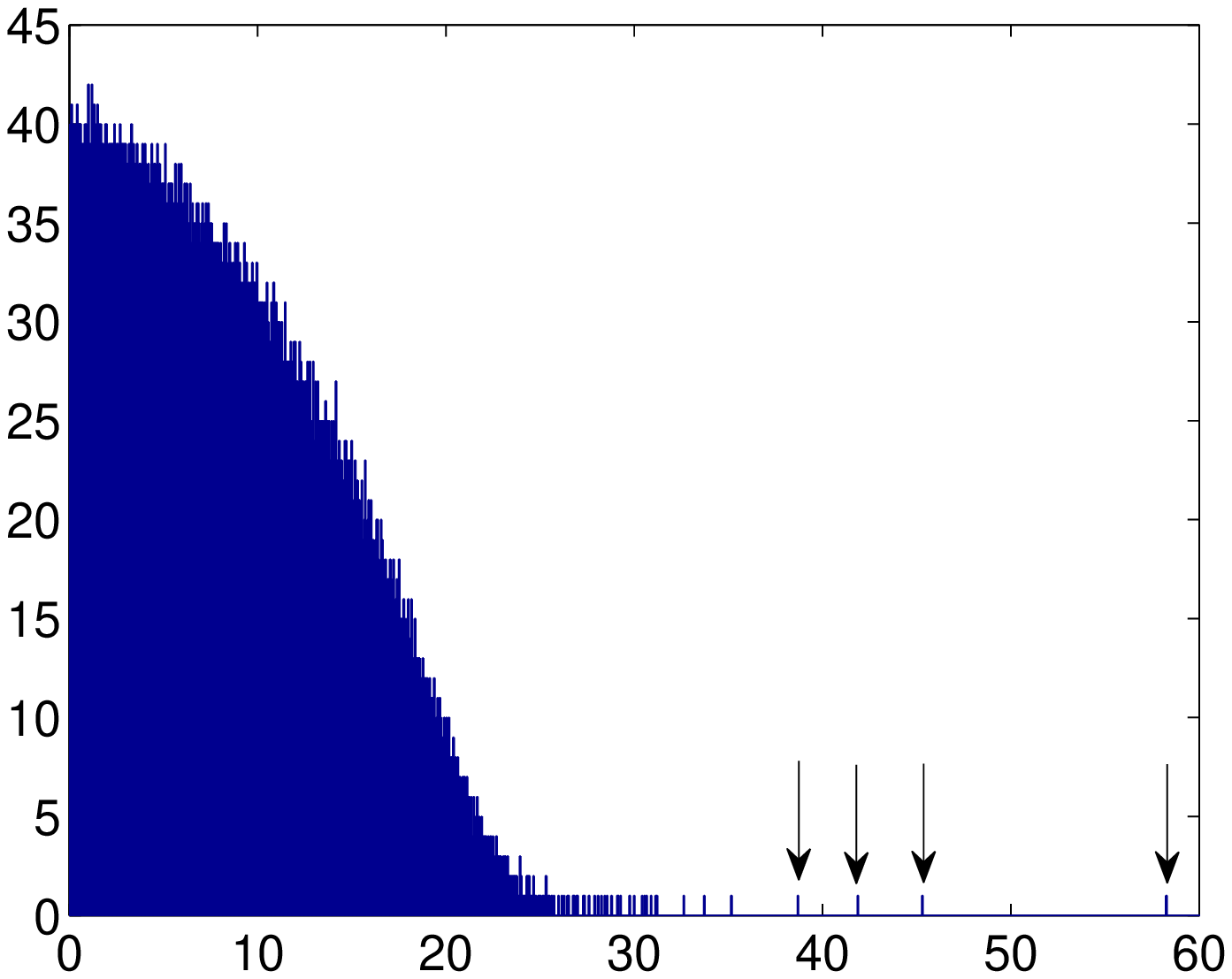}
\includegraphics[width=8.cm]{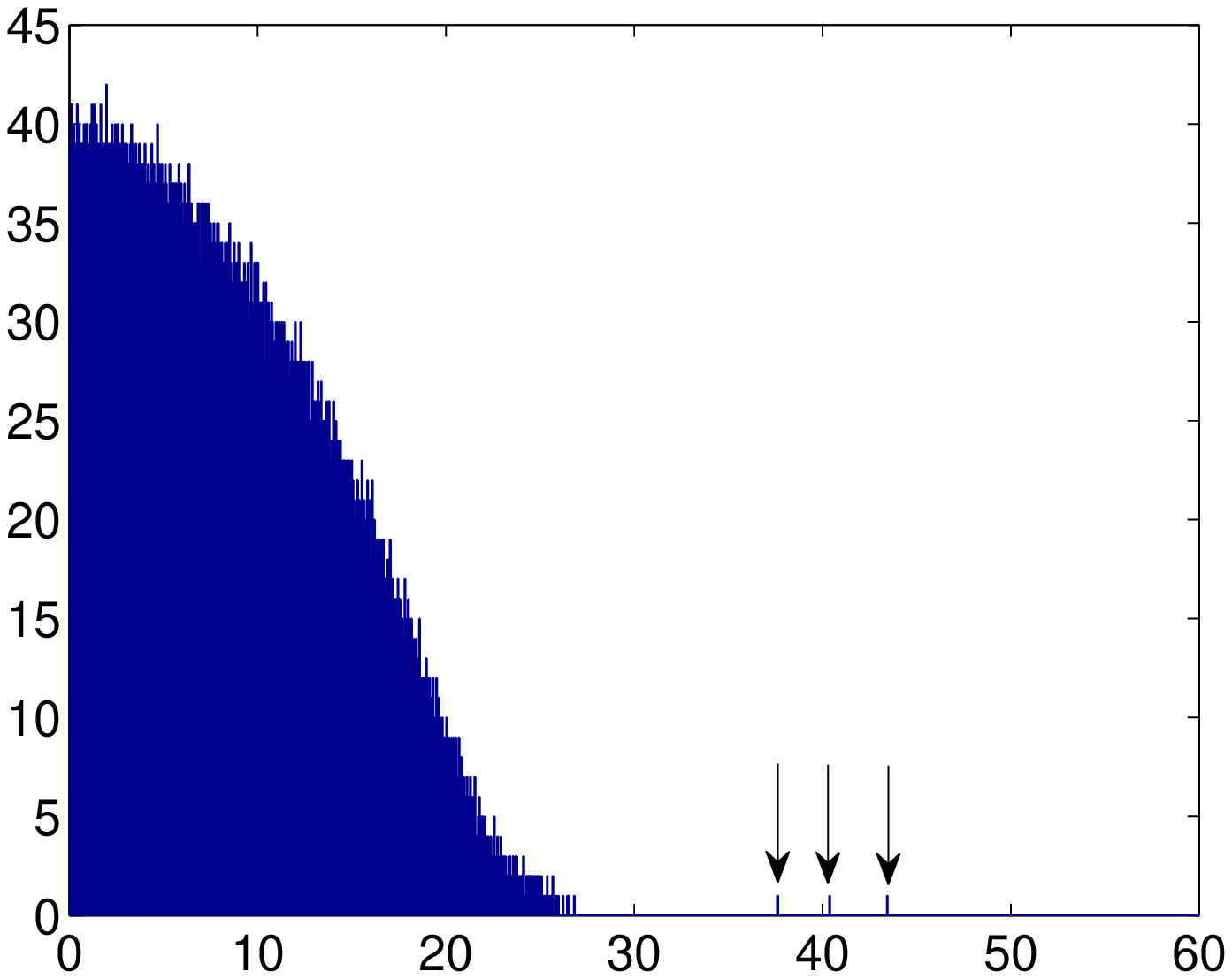}
\put(-321,46){\scriptsize{$\sigma_4$}}
\put(-310,46){\scriptsize{$\sigma_3$}}
\put(-298,46){\scriptsize{$\sigma_2$}}
\put(-263,46){\scriptsize{$\sigma_1$}}
\put(-92,46){\scriptsize{$\sigma_3$}}
\put(-81,46){\scriptsize{$\sigma_2$}}
\put(-70,46){\scriptsize{$\sigma_1$}}
\caption{{\small 
Histogram of the singular values of a partially 
revealed matrix $M^E$ before trimming (left) 
and after trimming (right) for $10^4 \times 10^4$ random rank-$3$ 
matrix $M$ with $\eps=30$ and $\Sigma=\diag(1,1.1,1.2)$.
After trimming the underlying rank-$3$ structure becomes clear.
Here the number of revealed entries per row follows a heavy tail distribution
with $\prob\{N=k\}= {\rm const.}/k^3$.}}
\label{fig:singularvalues}
\end{figure}

\noindent{\bf Trimming.} Set to zero all columns in $M^E$
with degree larger that $2|E|/n$. Set to $0$ all rows with degree
larger than $2|E|/m$. 

Figure \ref{fig:singularvalues} shows the singular value
distributions of $M^E$ and $\tM^E$ for a random rank-$3$ matrix $M$.
The surprise is that trimming (which amounts to `throwing out information')
makes the underlying rank-$3$ structure much more apparent.
This effect becomes even more important when the number of revealed entries 
per row/column follows a heavy tail distribution, as
for real data.

In terms of the above routines, our algorithm has the following structure.

\vspace{0.3cm}

\begin{tabular}{ll}
\hline
\multicolumn{2}{l}{ {\sc Spectral Matrix Completion}( matrix $M^E$ )}\\
\hline
1: & Trim $M^E$, and let $\tM^E$ be the output;\\
2: & Project $\tM^E$ to $\T_r(\tM^E)$;\\
3: & Clean residual errors by minimizing the discrepancy 
$F(X,Y)$.\\
\hline
\end{tabular}

\vspace{0.3cm}

The last step of the above algorithm allows to reduce (or eliminate) small 
discrepancies between $\T_r(\tM^E)$ and $M$, and is described below.

\vspace{0.1cm}

\noindent{\bf Cleaning.} Various implementations are possible,
but we found the following one particularly appealing.
Given $X\in\reals^{m\times r}$, $Y\in\reals^{n\times r}$
with $X^TX = m\id$ and $Y^TY=n\id$, we define 
\begin{eqnarray}
F(X,Y) & \equiv &\min_{S\in \reals^{r\times r}}\cF(X,Y,S)\, ,\label{eq:MinimizeS}\\
\cF(X,Y,S)&\equiv &  \frac{1}{2}\sum_{(i,j)\in E}(M_{ij}-(XSY^T)_{ij})^2 \, .
\end{eqnarray}
The cleaning step consists in writing $\T_r(\tM^E) = X_0S_0Y_0^T$
and minimizing $F(X,Y)$ locally with initial condition
$X=X_0$, $Y=Y_0$.

Notice that $F(X,Y)$ is easy to evaluate since it is defined 
by minimizing the quadratic function $S\mapsto \cF(X,Y,S)$ over 
the low-dimensional matrix $S$. Further 
it depends on $X$ and $Y$ only through their column spaces.
In geometric terms, $F$ is a function defined over the 
cartesian product of two Grassmann manifolds 
(we refer to Section \ref{sec:result2} for background
and references). Optimization over
Grassmann manifolds is a well understood topic
\cite{Edelman} and efficient
algorithms (in particular Newton and conjugate gradient)
can be applied. To be definite, we assume that gradient descent 
with line search is used to minimize $F(X,Y)$.

Finally, the implementation proposed here implicitly assumes that the rank 
$r$ is known. In practice this is a non-issue. Since $r\ll n$,
a loop over the value of $r$ can be added at little extra cost.
For instance, in 
collaborative filtering applications, $r$ ranges between $10$ and $30$.
%
%
\subsection{Main results}\label{sec:MainResults}

Notice that computing $\T_r(\tM^E)$ only requires to 
find the first $r$ singular vectors of a sparse matrix.
Our main result establishes that 
this simple procedure  achieves arbitrarily small relative root mean square
error from $O(nr)$ revealed entries. 
We define the relative root mean square error as
\begin{eqnarray}
{\rm RMSE} \equiv 
\left[\frac{1}{mn\Mmax^2}||M-\T_r(\tM^E)||_{\rm F}^2\right]^{1/2}\, .
\end{eqnarray}
where we denote by $||A||_F$ the Frobenius norm of matrix $A$.
Notice that the factor $(1/mn)$ corresponds to the usual normalization by 
the number of entries and the factor $(1/\Mmax^2)$ corresponds to 
the maximum size of the matrix entries 
where $M$ satisfies $|M_{i,j}|\le\Mmax$ for all $i$ and $j$.
\begin{thm}\label{thm:Main}
Assume $M$ to be a rank $r$ 
 matrix of dimension $n\alpha\times n$ that satisfies 
$|M_{i,j}| \le \Mmax $ for all $i,j$. 
Then with probability larger than $1-1/n^3$
\begin{eqnarray}
\frac{1}{mn\Mmax^2}||M-\T_r(\tM^E)||_{\rm F}^2\le  C\,\frac{\alpha^{3/2}rn}{|E|}\, ,
\label{eq:Main}
\end{eqnarray}
for some numerical constant $C$.
\end{thm}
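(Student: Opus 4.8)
The plan is to control $\|M-\T_r(\tM^E)\|_F$ by comparing $\T_r(\tM^E)$ with a rescaled version of $M$ through the spectral behaviour of the trimmed sparse matrix. Write $\tM^E$ for the trimmed data matrix and recall that $\E[M^E] = (|E|/mn)M$ (ignoring the measure-zero effect of trimming on the expectation). Set $p \equiv |E|/mn$ and decompose
\begin{eqnarray*}
\tM^E \;=\; pM \;+\; Z\, ,
\end{eqnarray*}
where $Z \equiv \tM^E - pM$ is a ``noise'' matrix: it has zero mean on the untrimmed entries and captures both the sampling fluctuations and the (deterministic but small) effect of zeroing out high-degree rows and columns. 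The key analytic input is the generalization of the Friedman--Kahn--Szemer\'edi / Feige--Ofek bound advertised in the abstract, which should give that, with probability at least $1-1/n^3$, the operator norm obeys $\|Z\|_2 \le C'\sqrt{|E|/n}$ (after the appropriate $\alpha$-dependent rescaling), i.e.\ the trimmed random matrix concentrates in spectral norm at the ``right'' scale $\sqrt{p n}$ rather than the $\sqrt{\log n/\log\log n}$ scale that plagues the untrimmed matrix. This is the step I expect to be the main obstacle, since it is exactly where the heavy-tailed degree sequence is tamed and where the bulk of the technical work (an $\epsilon$-net / discrepancy argument over sparse vectors, split into ``light pairs'' and ``heavy pairs'') lives.

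Granting the spectral bound, the rest is a standard perturbation argument. Since $M$ has rank $r$, the matrix $pM$ has rank $r$, so $\T_r(pM) = pM$ and $pM$ is the best rank-$r$ approximation of itself. Let $P \equiv \T_r(\tM^E)$ denote the rescaled rank-$r$ truncation, so that $(1/p)\,\T_r(\tM^E)$ in the paper's normalization is $(mn/|E|)\sum_{i\le r}\sigma_i x_i y_i^T$. First I would use the fact that truncating to the top $r$ singular values is a near-isometric projection: for any matrix $A$ and its best rank-$r$ approximation $A_r$ one has $\|A - A_r\|_2 = \sigma_{r+1}(A)$, hence
\begin{eqnarray*}
\|\T_r(\tM^E) - pM\|_F \;\le\; \|\T_r(\tM^E) - \tM^E\|_F + \|\tM^E - pM\|_F \;\le\; 2\sqrt{r}\,\|\tM^E - pM\|_2 + \sqrt{r}\,\|Z\|_2\, ,
\end{eqnarray*}
using that both $\T_r(\tM^E)-\tM^E$ (after discarding all-but-the-leading-$r$ error, which sits in a subspace of rank at most... more carefully, $\T_r(\tM^E)-\tM^E$ has the tail singular values) and the relevant difference matrices have rank at most $2r$, so their Frobenius norm is at most $\sqrt{2r}$ times their operator norm. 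Combining, $\|\T_r(\tM^E) - pM\|_F \le C''\sqrt{r}\,\|Z\|_2 \le C'''\sqrt{r}\sqrt{|E|/n}$.

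Finally I would divide through by $p = |E|/mn$ to return to the original scale: $\|(1/p)\T_r(\tM^E) - M\|_F \le (mn/|E|)\cdot C'''\sqrt{r|E|/n}$, so that
\begin{eqnarray*}
\|M - \T_r(\tM^E)\|_F^2 \;\le\; C\,\frac{(mn)^2}{|E|^2}\cdot\frac{r|E|}{n} \;=\; C\,\frac{m^2 n\, r}{|E|}\, ,
\end{eqnarray*}
where here $\T_r(\tM^E)$ denotes the rescaled object of Theorem~\ref{thm:Main}. Dividing by $mn\Mmax^2$ and using $m = n\alpha$ together with the bound $\Mmax$ absorbed into the normalization (the $\alpha$-powers come out right because the FKS-type bound carries an $\alpha$-dependent constant and there is one more factor of $m/n=\alpha$ from the rescaling), one obtains the claimed $C\alpha^{3/2}rn/|E|$. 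The two things to be careful about are: (i) the trimming genuinely changes $\E[\tM^E]$ away from $pM$, so one must separately bound the Frobenius norm of the ``trimmed-away'' part --- this is controlled because with high probability only $O(1)$-fraction... in fact a vanishing fraction of rows/columns exceed twice the average degree, and $\Mmax$ bounds each discarded entry; (ii) tracking the exact power of $\alpha$ through the rescaling and through the asymmetric ($m\neq n$) form of the FKS bound. Neither of these is conceptually hard, so the real content remains the sparse-matrix spectral estimate.
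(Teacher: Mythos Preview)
Your overall strategy is exactly the paper's: reduce to an operator-norm bound on $\tM^E - pM$ (their Lemma~3.2, the Friedman--Kahn--Szemer\'edi/Feige--Ofek generalization), then use a rank-$2r$ argument to pass to the Frobenius norm. The identification of where the real work lies (the FKS-type spectral estimate for the trimmed matrix) is also correct.

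However, the displayed Frobenius chain is wrong as written. You apply the triangle inequality in Frobenius norm,
\[
\|\T_r(\tM^E) - pM\|_F \;\le\; \|\T_r(\tM^E) - \tM^E\|_F + \|\tM^E - pM\|_F\,,
\]
and then try to bound each piece by $\sqrt{2r}$ times its operator norm. Neither piece has small rank: $\tM^E - pM = Z$ is a full-rank random matrix, and $\T_r(\tM^E)-\tM^E$ carries \emph{all} the tail singular values, so its rank is $\min(m,n)-r$. You even flag this (``more carefully, $\T_r(\tM^E)-\tM^E$ has the tail singular values'') but do not repair it; as stated the bound $\|\tM^E - pM\|_F \le \sqrt{r}\,\|Z\|_2$ is simply false.

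The fix, which is precisely what the paper does, is to reverse the order: first bound the \emph{operator} norm of $M-\T_r(\tM^E)$ by the triangle inequality,
\[
\|M-\T_r(\tM^E)\|_2 \;\le\; \Big\|\tfrac{1}{p}\tM^E - \T_r(\tM^E)\Big\|_2 + \Big\|M - \tfrac{1}{p}\tM^E\Big\|_2
\;=\; \tfrac{1}{p}\,\sigma_{r+1}(\tM^E) + \tfrac{1}{p}\|Z\|_2 \;\le\; \tfrac{2}{p}\|Z\|_2\,,
\]
using Weyl's inequality ($\sigma_{r+1}(\tM^E)\le \sigma_{r+1}(pM)+\|Z\|_2=\|Z\|_2$) for the last step. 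Only \emph{then} invoke low rank, applied to the single matrix $M-\T_r(\tM^E)$, which genuinely has rank at most $2r$, to get $\|M-\T_r(\tM^E)\|_F\le\sqrt{2r}\,\|M-\T_r(\tM^E)\|_2$. With this correction your argument and the paper's are identical up to whether one carries the rescaling factor throughout or divides by $p$ at the end.
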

This theorem is proved in Section \ref{sec:result}.

Notice that the top $r$ singular values and singular vectors
of the sparse matrix $\tM^E$ can be computed efficiently 
by subspace iteration \cite{Berry92largescale}.
Each iteration requires $O(|E|r)$ operations. As proved in Section
\ref{sec:result}, the $(r+1)$-th singular value is smaller than 
one half of the  $r$-th one. As a consequence, subspace iteration converges 
exponentially. A simple calculation shows that $O(\log n)$
iterations are sufficient to ensure the error bound mentioned.

The `cleaning' step in the above pseudocode improves
systematically over $\T_r(\tM^E)$ and, for large enough $|E|$,
reconstructs $M$ exactly.
\begin{thm}\label{thm:Main2}
Assume $M$ to be a rank $r$ 
matrix that satisfies the incoherence conditions A1 and A2 with $(\mu_0,\mu_1)$.
Let $\mu = \max\{\mu_0 , \mu_1\}$.
Further, assume $\Sigma_{\rm min}\le \Sigma_1,\dots,\Sigma_{r}\le 
\Sigma_{\rm max}$ with $\Sigma_{\rm min}, \Sigma_{\rm max}$ bounded 
away from $0$ and $\infty$.
Then there exists a numerical constant 
$C'$ such that, if 
\begin{eqnarray}
|E|\ge  C' n r \sqrt{\alpha}\, \Big(\frac{\Smax}{\Smin}\Big)^2 
\max\Big\{\mu_0 \log n \, , \, \mu^2 r \sqrt{\alpha} \Big(\frac{\Sigma_{\rm max}}{\Sigma_{\rm min}}\Big)^4\Big\}\, ,
\end{eqnarray}
then the cleaning procedure in
{\sc Spectral Matrix Completion} converges, with high probability,
to the matrix $M$.
\end{thm}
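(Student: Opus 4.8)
The plan is to show that gradient descent applied to the (suitably regularized) discrepancy functional, started from the point $x_0=(X_0,Y_0)$ given by the column spaces of $\T_r(\tM^E)$, converges to the point $x_*=(U,V)$ determined by the true factors, at which the functional vanishes; exact reconstruction of $M$ then follows. For the analysis one works with $\tF=F+\rho\,G$, where $G$ is a smooth penalty that grows whenever a row of $X$ or of $Y$ has squared norm much larger than its typical value $\mu_0 r$, so that incoherence is preserved along the trajectory. Since $F$, and hence $\tF$, depends on $X,Y$ only through their column spaces, it is a function on the product of Grassmann manifolds $\Grass(m,r)\times\Grass(n,r)$, which we endow with the product of the natural metrics; write $\Co(\rho)$ for the ball of radius $\rho$ around $x_*$. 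The elementary observation driving everything is that at $x_*$ one may take $S=\Sigma$, whence $\cF(U,V,\Sigma)=0$ and (using A1, which gives $G(x_*)=0$) $\tF(x_*)=0$.

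First I would control the initialization. By Theorem~\ref{thm:Main}, $\|M-\T_r(\tM^E)\|_{\rm F}^2\le C\,mn\,\Mmax^2\,\alpha^{3/2}rn/|E|$. Writing $M=U\Sigma V^T$ with $U^TU=m\id$ and $V^TV=n\id$ shows that the $r$-th singular value of $M$ equals $\sqrt{mn}\,\Sigma_r\ge\sqrt{mn}\,\Smin$, so $M$ and $\T_r(\tM^E)$ are two rank-$r$ matrices whose difference is small relative to this gap. A Wedin/Davis--Kahan perturbation bound for the column and row spaces of such a pair gives
\[
d(x_0,x_*)^2\;\le\;C\,\frac{\|M-\T_r(\tM^E)\|_{\rm F}^2}{mn\,\Smin^2}\;\le\;C'\,\Big(\frac{\Smax}{\Smin}\Big)^2\mu^2 r^2\,\frac{\alpha^{3/2}n}{|E|}\,,
\]
using $\Mmax^2\le\mu^2 r^2\Smax^2$ (from A1 and Cauchy--Schwarz). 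Choosing the universal constant in the hypothesis on $|E|$ large enough makes $d(x_0,x_*)$ as small as we please; in particular $x_0\in\Co(\delta_0)$ for a small constant $\delta_0$ fixed below.

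The technical core is a \emph{regularity lemma}: there is a constant $\delta>0$ such that, with high probability, for every $x=(X,Y)\in\Co(\delta)$ one has
\[
C_1\,\frac{|E|}{mn}\,\Smin^2\,d(x,x_*)^2\;\le\;\tF(x)\;\le\;C_2\,\frac{|E|}{mn}\,\Smax^2\,d(x,x_*)^2\,,
\]
together with a Polyak--{\L}ojasiewicz-type bound $\|\grad\tF(x)\|^2\ge C_3\,\frac{|E|}{mn}\,\Smin^2\,\tF(x)$ and a Hessian bound $\|\Hess\tF(x)\|\le C_4\,\frac{|E|}{mn}\,\Smax^2$. One proves this by first replacing the random sum over $E$ by its expectation, which produces (up to the factor $|E|/mn$) the deterministic functional $\min_S\tfrac12\|M-XSY^T\|_{\rm F}^2+\rho\,G(X,Y)$; a direct computation using $M=U\Sigma V^T$ and the boundedness of $\Smin,\Smax$ shows this population functional obeys the same estimates near $x_*$. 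One then shows that $\tF$, $\grad\tF$ and $\Hess\tF$ concentrate about their population counterparts \emph{uniformly} over $\Co(\delta)$, by an $\epsilon$-net on the manifold, Bernstein/Azuma tail bounds for the sums over the uniformly random set $E$ at each net point, and a union bound. This is precisely where one needs $|E|\gtrsim\mu_0 nr\sqrt\alpha\log n$ — the $\log n$ coming from the union bound and from the requirement that every row and column of $M^E$ be sampled — together with the generalization of the Friedman--Kahn--Szemer\'edi/Feige--Ofek spectral estimate, and where the penalty $G$ is needed so that the bounds survive for every $X,Y$ reachable along the descent.

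Finally I would assemble the pieces. Gradient descent with line search makes $\tF$ nonincreasing, and by the Hessian bound on $\Co(\delta)$ the iterates move by controlled amounts. Taking $\delta_0$ a small enough multiple of $\delta$ — permitted by the initialization bound once the universal constant in the hypothesis is large — yields $\tF(x_0)\le C_2\tfrac{|E|}{mn}\Smax^2\delta_0^2<C_1\tfrac{|E|}{mn}\Smin^2\delta^2\le\inf_{\partial\Co(\delta)}\tF$, so the whole trajectory stays inside $\Co(\delta)$. There the gradient bound forces $\tF$ to decrease geometrically to $\inf_{\Co(\delta)}\tF=0$, so the limit point $\hat x=(\hat X,\hat Y)$ has $\tF(\hat x)=0$, and by the lower estimate of the regularity lemma $d(\hat x,x_*)=0$: $\hat X$ and $\hat Y$ span exactly the column spaces of $U$ and $V$. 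Hence, with $\hat S$ the minimizer of $S\mapsto\cF(\hat X,\hat Y,S)$, the matrix $\hat X\hat S\hat Y^T$ has the same column and row spaces as $M$ and agrees with $M$ on $E$; with high probability $M$ is the only such matrix (again by incoherence and $|E|$ large, equivalently the uniqueness of $\hat S$), so $\hat X\hat S\hat Y^T=M$. The main obstacle is the uniform concentration in the regularity lemma: simultaneously controlling the random quadratic form $\sum_{(i,j)\in E}(M_{ij}-(XSY^T)_{ij})^2$, its gradient and its Hessian over an entire Grassmannian neighborhood while keeping the iterates incoherent is what makes the argument delicate and accounts for the powers of $\Smax/\Smin$ and the two regimes ($\mu_0\log n$ versus $\mu^2 r\sqrt\alpha\,(\Smax/\Smin)^4$) in the required lower bound on $|E|$.
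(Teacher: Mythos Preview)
Your proposal is correct and follows the same architecture as the paper: control the initialization via Theorem~\ref{thm:Main} and a subspace--perturbation bound (the paper packages this as Remark~\ref{remark:Near}), work with the regularized functional $\tF=F+\rho G$ to keep the iterates incoherent, establish two--sided quadratic growth of $F$ near $\Um$ together with a gradient lower bound, and conclude by a level--set trapping argument for gradient descent.

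A few differences of execution are worth noting. First, for the ``regularity lemma'' the paper does \emph{not} use an $\epsilon$--net plus Bernstein/Azuma and a union bound. Instead it proves the quadratic bounds (Lemma~\ref{lemma:Quadratic}) and the gradient bound (Lemma~\ref{lemma:Gradient}) by expanding $X=U+\oW$, $Y=V+\oZ$ and invoking the restricted--isometry--type Theorem~4.1 of Cand\`es--Recht together with a random--graph estimate (Lemma~\ref{lemma:xAy}); this yields the required \emph{uniform} control over $\Co(4\mu_0)\cap\{d(\Xm,\Um)\le\delta\}$ directly, without discretizing the manifold. Your $\epsilon$--net route would work in principle, but to pass from net points to all points you must first bound the (random) Lipschitz constant of $F$, $\grad F$ uniformly on the neighborhood, which in the end is essentially the same RIP--type input; the paper's route is shorter and is what accounts for the precise $\mu_0\log n$ versus $\mu^2 r\sqrt{\alpha}(\Smax/\Smin)^4$ dichotomy in the hypothesis. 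Second, the paper does not need a Hessian bound or a PL inequality per se: once one has $C_1\Smin^2 d(\Xm,\Um)^2\le F(\Xm)/(\sqrt{\alpha}n\eps)\le C_2\Smax^2 d(\Xm,\Um)^2$ and $\|\grad\tF(\Xm)\|^2\ge Cn\eps^2\Smin^4 d(\Xm,\Um)^2$, the level--set argument alone (with the explicit line--search constraint $d(\Xm_k(t),\Xm_0)\le\gamma$, which is shown never to bind) forces the iterates to stay in a small ball and to converge to the unique stationary point there, namely $\Um$. Your PL+Hessian packaging reaches the same conclusion but is slightly heavier than what is actually needed. Finally, in your last step, once $d(\hat\Xm,\Um)=0$ the identification $\hat X\hat S\hat Y^T=M$ is immediate from the $\|S-\Sigma\|_F$ term in the lower bound of Lemma~\ref{lemma:Quadratic}; you do not need a separate uniqueness argument for $\hat S$.
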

This theorem is proved in Section \ref{sec:result2}.
The basic intuition is that, for $|E|\ge  C'(\alpha) nr\, \max\{\log n, r\}$,
$T_r(\tM^E)$ is so close to $M$ that the cost function is well approximated
by a quadratic function.

Theorem \ref{thm:Main} is optimal: the number of degrees of
freedom in $M$ is of order $nr$, without the same number
of observations is impossible to fix them. 
The extra $\log n$ factor in Theorem \ref{thm:Main2} is due to a
coupon-collector effect \cite{CaR08,KMO08,KOM09}:
it is necessary that $E$ contains at least one entry per row
and one per column and this happens only for $|E|\ge C n\log n$.
As a consequence, for rank $r$ bounded, Theorem~\ref{thm:Main2}
is optimal. It is suboptimal by a polylogarithmic factor for
$r = O(\log n)$.
%
%
\subsection{Related work}

Beyond collaborative filtering, low rank models are
used for clustering, information retrieval, 
machine learning, and image processing.
In \cite{Fazel}, the NP-hard problem of finding a matrix of minimum rank
satisfying a set of affine constraints was addresses through
convex relaxation. This problem is analogous to the problem of
finding the sparsest vector satisfying a set of affine constraints,
which is at the heart of \emph{compressed sensing} 
\cite{Donoho,CandesRombergTao}. 
The connection with compressed sensing was emphasized in
\cite{RFP}, that provided performance guarantees under appropriate conditions 
on the constraints. 

In the case of collaborative filtering, we are interested
in finding a matrix $M$ of minimum rank that matches the known 
entries $\{M_{ij}:\, (i,j)\in E\}$. Each known entry thus provides 
an affine constraint. Cand\`es and Recht \cite{CaR08} 
introduced the incoherent model for $M$. Within this model, they proved
that, if $E$ is random, the convex relaxation
correctly reconstructs  $M$ as long as $|E|\ge C\, r\,n^{6/5}\log n$. 
On the other hand, from a purely information theoretic 
point of view (i.e. disregarding algorithmic considerations),
it is clear that $|E| = O(n\,r)$ observations should allow to
reconstruct $M$ with arbitrary precision.
Indeed this point was raised in \cite{CaR08} and proved in \cite{KMO08},
through a counting argument. 

The present paper describes an efficient 
algorithm that reconstructs a rank-$r$ matrix from $O(n\, r)$
random observations. The most complex component of our algorithm 
is the SVD in step $2$.
We were able to treat realistic data sets with 
$n\approx 10^5$. This must be compared
with the $O(n^4)$ complexity of semidefinite programming
\cite{CaR08}.

Cai, Cand\`es and Shen \cite{CCS08} recently proposed a low-complexity 
procedure to solve the convex program posed in \cite{CaR08}.
Our spectral method is akin to a single step of this procedure,
with the important novelty of the trimming step that 
improves significantly its performances. Our analysis techniques might
provide a new tool for characterizing the convex relaxation as well.

Theorem \ref{thm:Main} can also be compared with a copious line of
work in the theoretical computer science literature
\cite{FKV,AFK01,AchlioptasRank}. An important motivation in 
this context is the development of fast algorithms for low-rank
approximation. In particular, Achlioptas and McSherry \cite{AchlioptasRank}
prove a theorem analogous to \ref{thm:Main}, but holding only
for $|E|\ge (8\log n)^4n$ (in the case of square matrices).

A short account of our results was submitted  to 
the 2009 International Symposium on Information Theory \cite{KOM09}. 
While the present paper was under completion, C\'andes and Tao 
posted online a preprint proving a theorem analogous to 
\ref{thm:Main2} \cite{CandesTaoMatrix}.
Once more, their approach is substantially different from ours.
%
%
\subsection{Open problems and future directions}

It is worth pointing out some limitations of our results,
and interesting research directions:

\vspace{0.1cm}

\emph{1. Optimal RMSE with $O(n)$ entries.}  
Numerical simulations with the {\sc Spectral Matrix Completion} 
algorithm suggest that the RMSE decays much  faster with 
the number of observations per degree of freedom $(|E|/nr)$, than indicated by
Eq.~(\ref{eq:Main}). This improved behavior is a consequence of the
cleaning step in the algorithm. 
It would be important to characterize the decay of 
RMSE with $(|E|/nr)$. 

\vspace{0.1cm}

\emph{2. Threshold for exact completion.} As pointed out,
Theorem \ref{thm:Main2} is order optimal for $r$ bounded. 
It would nevertheless be useful to derive quantitatively sharp 
estimates in this regime. A systematic numerical study was initiated in 
\cite{KMO08}. It appears that available theoretical estimates
(including the recent ones in \cite{CandesTaoMatrix}) are 
for larger values of the rank, we 
expect that our arguments can be strenghtened to prove exact reconstruction 
for $|E|\ge C'(\alpha)nr\log n$ for all values of $r$.

\vspace{0.1cm}

\emph{3. More general models.}
The model studied here and introduced in  \cite{CaR08}
presents obvious limitations. In applications to collaborative
filtering, the subset of observed entries $E$ is far
from uniformly random. 
A recent paper \cite{Rigidity} investigates the uniqueness of the 
solution of the matrix completion problem for general sets $E$. 
In applications to fast low-rank approximation,
it would be desirable to consider non-incoherent matrices
as well (as in  \cite{AchlioptasRank}).
  
%
%
\section{Incoherence property and some notations}\label{sec:coherence}

In order to formalize the notion of incoherence, 
we write $U=[u_1,u_2,\dots,u_r]$ and $V= [v_1,v_2,\dots,v_r]$
for the columns of the two factors, with $||u_i||=\sqrt{m}$, 
$||v_i|| = \sqrt{n}$ and $u_i^Tu_j=0$, $v_i^Tv_j=0$ for $i\neq j$
(there is no loss of generality in this, since normalizations
 can be adsorbed by redefining $\Sigma$).
We shall further write $\Sigma = \diag(\Sigma_1,\dots,\Sigma_r)$
with $\Sigma_1\ge \Sigma_2\ge\cdots \ge\Sigma_r > 0$. 
 
The matrices $U$, $V$ and $\Sigma$ will be said to be
$(\mu_0,\mu_1)$-\emph{incoherent} if they satisfy 
the following properties:
\begin{itemize}

\item[{\bf A1.}] For all  $i\in [m]$, $j\in [n]$, we have  
           $\sum_{k=1}^{r}{U_{i,k}^2} \le \mu_0 r$,
           $\sum_{k=1}^{r}{V_{i,k}^2} \le \mu_0 r $.
\item[{\bf A2.}] For all $i\in[m]$, $j\in [n]$,
we have $|\sum_{k=1}^{r}{U_{i,k}(\Sigma_k/\Sigma_1)V_{j,k}}|\leq\mu_1r^{1/2}$.
\end{itemize}
Apart from difference in normalization, these assumptions coincide with the ones in \cite{CaR08}.

Notice that the second incoherence assumption A2 
implies the bounded entry condition in Theorem 
\ref{thm:Main} with $\Mmax=\mu_1r^{1/2}$.
In the following, whenever we write that a property $A$ holds 
with high probability (w.h.p.), we mean that there exists a function
$f(n) = f(n;\alpha)$ such that 
$\prob(A)\ge 1-f(n)$ and $f(n)\to 0$.
In the case of exact completion (i.e. in the proof of Theorem 
\ref{thm:Main2}) $f(\,\cdot\, )$ can also depend on 
$\mu_0$, $\mu_1$, $\Sigma_{\rm min}$, $\Sigma_{\rm max}$, and $f(n)\to 0$ for 
$\mu_0,\mu_1,\Sigma_{\rm min},\Sigma_{\rm max}$ bounded away from $0$ and $\infty$.

Probability is taken with respect to the uniformly random 
subset $E\subseteq [m]\times [n]$.
Define $\eps\equiv|E|/\sqrt{mn}$. In the case when $m=n$, 
$\eps$ corresponds to the average number of revealed entries per row or column.
Then, it is convenient to work with a model in which 
each entry is revealed independently with probability
$\eps/\sqrt{mn}$. Since, with high probability
$|E|\in [\eps\sqrt{\alpha}\, n- A\sqrt{n\log n},\eps\sqrt{\alpha}\, 
n+ A\sqrt{n\log n}]$, any guarantee on the algorithm performances that holds
within one model, holds within the other model as well
if we allow for a vanishing shift in $\eps$.

Notice that we can assume $m\geq n$, 
since we can always apply our theorem to the transpose of the matrix $M$.
Throughout this paper, therefore, we will assume $\alpha \geq 1$.
Finally, we will use $C$, $C'$ etc.
to denote numerical constants.

Given a vector $x\in\reals^n$, $||x||$ will denote its Euclidean norm. 
For a matrix $X\in\reals^{n\times n'}$, $||X||_F$ is its Frobenius norm,
and $||X||_2$ its operator norm (i.e. $||X||_2= \sup_{u\neq 0}||Xu||/||u||$). 
The standard scalar product between vectors or matrices will sometimes
be indicated by $\<x,y\>$ or $\<X,Y\>$, respectively.
Finally, we use the standard combinatorics notation
$[N]= \{1,2,\dots,N\}$ to denote the set of first $N$ integers.
%
%
\section{Proof of Theorem \ref{thm:Main} and technical results}
\label{sec:result}

As explained in the previous section, the crucial idea
is to consider the singular value decomposition
of the trimmed matrix $\tM^E$ 
instead of the original matrix $M^E$, as in Eq.~(\ref{eq:WrongSVD}).
We shall then redefine $\{\sigma_i\}$, $\{x_i\}$, $\{y_i\}$, by letting
\begin{eqnarray}
\tM^E = \sum_{i=1}^{\min(m,n)}\sigma_ix_iy_i^T\, .
\end{eqnarray} 
Here $||x_i||=||y_i||=1$, $x_i^Tx_j= y_i^Ty_j=0$ for $i\neq j$ 
and  $\sigma_1\ge\sigma_2\ge\dots\ge 0$.
Our key technical result is that, apart from a trivial rescaling,
these singular values are close to the ones of the full matrix $M$.
\begin{lemma}\label{lem:singularvalues}
There exists a numerical constant $C>0$ such that, 
with probability larger than $1-1/n^3$
\begin{eqnarray}
 \left|\frac{\sigma_q}{\eps}-\Sigma_q\right| \le C\Mmax\sqrt{\frac{\alpha}{\eps}} \;,  \label{eq:singularvalues}
\end{eqnarray}
where it is understood that $\Sigma_q=0$ for $q>r$.
\end{lemma}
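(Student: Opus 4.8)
The plan is to bound $|\sigma_q/\eps - \Sigma_q|$ by comparing the trimmed matrix $\tM^E$ with the rescaled expectation $\frac{\eps}{\sqrt{mn}} M$. Write $\tM^E = \frac{\eps}{\sqrt{mn}} M + Z$ where $Z \equiv \tM^E - \frac{\eps}{\sqrt{mn}} M$ is a (mean-zero-after-trimming, up to corrections) ``noise'' matrix. Since $\frac{1}{\eps}\cdot\frac{\eps}{\sqrt{mn}} M$ has singular values exactly $\Sigma_1,\dots,\Sigma_r$ (using the normalization $\|u_i\|=\sqrt m$, $\|v_i\|=\sqrt n$, so that $M = U\Sigma V^T$ has $\sigma_q(M)=\Sigma_q\sqrt{mn}$), Weyl's perturbation inequality for singular values gives
\begin{eqnarray*}
\left|\frac{\sigma_q(\tM^E)}{\eps} - \Sigma_q\right| = \left|\sigma_q\Big(\tfrac{1}{\eps}\tM^E\Big) - \sigma_q\Big(\tfrac{1}{\sqrt{mn}} M\Big)\right| \le \frac{1}{\eps}\,\|Z\|_2\, .
\end{eqnarray*}
So the entire lemma reduces to the operator-norm bound $\|Z\|_2 \le C\Mmax\sqrt{\alpha\eps}$, which is exactly where the generalization of the Friedman--Kahn--Szemer\'edi / Feige--Ofek theorem advertised in the abstract must enter.

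\textbf{Controlling $\|Z\|_2$.} First I would handle the untrimmed noise $Z_0 \equiv M^E - \frac{\eps}{\sqrt{mn}} M$: its entries are independent, mean zero, bounded by $\Mmax$ in absolute value, with variance $O(\eps\Mmax^2/\sqrt{mn})$ per entry. A direct second-moment / trace-moment computation shows $\E\|Z_0\|_2 = O(\Mmax\sqrt\eps\,(mn)^{1/4}\cdot\text{poly})$ — but the naive bound is not good enough because of the heavy-tailed degree columns/rows, which is precisely why trimming is needed. The real work is to show that after zeroing out the $O(\log n/\log\log n)$-degree rows and columns, the resulting matrix $Z$ has operator norm $O(\Mmax\sqrt{\alpha\eps})$ with probability $\ge 1 - n^{-3}$. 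This is done via the discrepancy / bounded-differences argument of Feige--Ofek: one bounds $\sup_{x,y\in S^{n-1}} \langle x, Zy\rangle$ by discretizing the sphere with an $\eps$-net, splitting the sum $\sum_{(i,j)} Z_{ij} x_i y_j$ into a ``light pairs'' part (where $|x_i y_j|$ is small) controlled by a Chernoff/Bernstein bound, and a ``heavy pairs'' part controlled by a combinatorial counting argument on the bipartite graph of revealed entries — using crucially that trimming has killed the high-degree vertices so that no small vertex set can contain too many edges. I would state this as a standalone proposition (the promised FKS/Feige--Ofek generalization) and invoke it here.

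\textbf{Assembling the bound and the probability.} Given $\|Z\|_2 \le C\Mmax\sqrt{\alpha\eps}$ w.p. $\ge 1-n^{-3}$, dividing by $\eps$ gives $\|Z\|_2/\eps \le C\Mmax\sqrt{\alpha/\eps}$, which is exactly (\ref{eq:singularvalues}); the claim for $q>r$ follows since then $\Sigma_q=0$ and $\sigma_q(\tM^E)/\eps \le \sigma_{r+1}(\tM^E)/\eps \le \|Z\|_2/\eps$ by the same Weyl bound applied to the rank-$r$ matrix $M$. One subtlety I would be careful about: the independent-sampling model versus the fixed-$|E|$ model — as noted in the text these differ by a vanishing shift in $\eps$, so I would prove everything in the $\mathrm{Bernoulli}(\eps/\sqrt{mn})$ model and transfer. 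A second subtlety: trimming is defined in terms of the realized degrees, so $Z$ is not simply a sum of independent entries; the standard fix is to note that trimming only removes a set of rows/columns whose identity is itself a low-probability-deviation event, and to union-bound over the (polynomially many) possible trimmed supports, or equivalently to run the net argument conditionally on the degree sequence.

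\textbf{Main obstacle.} The hard part is the operator-norm bound on the trimmed noise matrix $Z$ — specifically the heavy-pairs combinatorial estimate, which is where the Friedman--Kahn--Szemer\'edi machinery lives and where the rectangular ($\alpha\neq 1$) and weighted (entries in $[-\Mmax,\Mmax]$ rather than $\{0,1\}$) generalizations require genuine care beyond the classical regular-graph setting. Everything else (Weyl's inequality, the reduction, the model transfer) is routine.
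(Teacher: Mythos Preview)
Your proposal is correct and follows essentially the same route as the paper. The paper states the operator-norm bound $\|\tM^E-\tfrac{\eps}{\sqrt{mn}}M\|_2\le C\Mmax\sqrt{\alpha\eps}$ as a separate lemma (Lemma~\ref{lem:spectralnorm}), proves it exactly via the discretization / light-couples Bernstein bound / heavy-couples discrepancy argument you outline (including the union bound over possible trimmed supports), and then derives Lemma~\ref{lem:singularvalues} by the variational min--max characterization of singular values --- which is the same content as your invocation of Weyl's inequality.
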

This result generalizes a celebrated 
bound on the second eigenvalue of random graphs \cite{FKS89,FeO05}
and is illustrated in Fig.~\ref{fig:singularvalues}:
the spectrum of $\tM^E$ clearly reveals the rank-$3$ structure of $M$. 

As shown in Section \ref{sec:SingValues},
Lemma \ref{lem:singularvalues} is a direct consequence of 
the following estimate.
\begin{lemma}\label{lem:spectralnorm}
There exists a numerical constant $C>0$ such that, 
with probability larger than $1-1/n^3$
\begin{eqnarray}
 \left|\left|\frac{\eps}{\sqrt{mn}}M-\tM^E\right|\right|_2 
 \le C\Mmax\sqrt{\alpha\eps} \, .\label{eq:spectralnormbound}
\end{eqnarray}
\end{lemma}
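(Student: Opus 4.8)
The plan is to bound the operator norm of the random matrix $Z \equiv \frac{\eps}{\sqrt{mn}}M - \tM^E$ by a second-moment / trace method, following the Friedman--Kahn--Szemer\'edi \cite{FKS89} and Feige--Ofek \cite{FeO05} strategy adapted to the non-homogeneous weights $M_{ij}$. First I would note that, before trimming, $\E[M^E] = \frac{\eps}{\sqrt{mn}}M$ (each entry is included independently with probability $\eps/\sqrt{mn}$ in the Bernoulli model, which by the remark in Section~\ref{sec:coherence} is equivalent to the fixed-$|E|$ model up to a vanishing shift in $\eps$), so $M^E - \E[M^E]$ is a zero-mean matrix with independent entries bounded by $\Mmax$ and variance at most $(\eps/\sqrt{mn})\Mmax^2$ per entry. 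A naive matrix-Bernstein or $\eps$-net bound would give $\|M^E - \E M^E\|_2 = O(\Mmax\sqrt{\eps} + \Mmax\sqrt{\log n})$ with the logarithm coming exactly from the heavy-tailed rows/columns; the whole point of the trimming step is to kill that logarithm.

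The key steps, in order, are: (i) reduce to bounding $\sup_{x,y}\, x^T Z y$ over unit vectors $x\in\reals^m$, $y\in\reals^n$, and discretize using a net of granularity $\delta$ over the sphere, paying the usual $(C/\delta)^{n}$ union-bound factor, so it suffices to control $x^TZy$ for fixed net vectors with a failure probability beating $e^{-\Theta(n)}$; (ii) split the contribution of each net pair into a ``light'' part, where the pairs $(i,j)$ with $|x_iy_j|$ small contribute, and a ``heavy'' part, where $|x_iy_j|$ is large; (iii) for the light part, the sum $\sum_{(i,j)\in E} M_{ij}x_iy_j$ is a sum of bounded independent terms to which a Bernstein/Chernoff bound applies, giving the desired $O(\Mmax\sqrt{\alpha\eps})$ with exponentially small failure probability, uniformly over the net; (iv) for the heavy part, which is where concentration fails, use a purely combinatorial ``bounded discrepancy'' argument — exactly as in Feige--Ofek — showing that after trimming, no small set of rows and columns can carry too many revealed entries, so the heavy contribution is also $O(\Mmax\sqrt{\alpha\eps})$ deterministically on the high-probability event; (v) separately bound $\|\frac{\eps}{\sqrt{mn}}M - \tM^E\|_2$ versus $\|M^E - \tM^E\|_2$, i.e. check that trimming removes a matrix of small operator norm (only $O(n)$ entries lie in over-heavy rows/columns, each of size $\le\Mmax$, and they are spread out), and combine via the triangle inequality.

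The main obstacle is step (iv): the heavy-pairs / discrepancy bound. One must show that the event ``there exist subsets $S\subseteq[m]$, $T\subseteq[n]$ such that the number of revealed entries of $M$ between $S$ and $T$ exceeds what the ``expected density times volume'' would predict by more than a constant factor'' has probability $o(1)$, and crucially this must survive after trimming — i.e. the trimmed matrix $\tM^E$ has the property that every pair of row/column sets has bounded discrepancy, where before trimming a few anomalous rows/columns violate it. This requires a careful case analysis depending on whether $|S|,|T|$ are small or large relative to $n$, balancing the number of choices of $(S,T)$ (entropy $\binom{m}{|S|}\binom{n}{|T|}$) against the Chernoff tail for the edge count, and separately arguing that the trimming threshold $2|E|/n$ (resp. $2|E|/m$) is exactly the right cutoff to remove precisely the rows/columns that would otherwise cause trouble. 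The generalization over \cite{FeO05} is that the ``edges'' now carry weights $M_{ij}\in[-\Mmax,\Mmax]$ rather than being $0/1$, but since the incoherence assumption only enters through $|M_{ij}|\le\Mmax$, one can simply absorb $\Mmax$ as a scaling and run the combinatorial argument on the support pattern $E$ alone. I would also flag step (i)'s bookkeeping — getting the net fine enough that the $\delta$-slack is genuinely lower-order, while coarse enough that $(C/\delta)^n$ is beaten by the Chernoff bounds — as the other place where constants must be tracked with some care, though it is routine relative to (iv).
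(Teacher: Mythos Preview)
Your steps (i)--(iv) are exactly the paper's strategy: discretize to nets $T_m,T_n$, split each bilinear form into light couples $L=\{(i,j):|x_iM_{ij}y_j|\le \Mmax\sqrt{\eps/mn}\}$ and heavy couples $\bL$, control the light part by Chernoff uniformly over the net, and control the heavy part by a Feige--Ofek discrepancy argument on the \emph{trimmed} adjacency matrix. So the architecture is right.

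The gap is step (v). You propose to pass from $M^E$ to $\tM^E$ by the triangle inequality, arguing that $\|M^E-\tM^E\|_2$ is small because ``trimming removes a matrix of small operator norm.'' This is false in the regime where the lemma is most interesting. For $\eps=O(1)$ the maximum row degree of $M^E$ is $\Theta(\log n/\log\log n)$, and a single such over-full row, once zeroed out, already contributes $\Theta(\Mmax\sqrt{\log n/\log\log n})$ to $\|M^E-\tM^E\|_2$; this is exactly the spurious $\sqrt{\log n}$ you were trying to eliminate, so the triangle inequality gives you nothing. Relatedly, your step (iii) as written applies Chernoff to $\sum_{(i,j)\in E}M_{ij}x_iy_j$, i.e. to the \emph{untrimmed} matrix, while your step (iv) explicitly uses trimming; the two halves do not glue together.

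The paper resolves this differently: it works with $\tM^E = M^{E,\cA}$ directly, and to recover independence for the Chernoff step it \emph{conditions} on the (random) set of untrimmed indices $\cA=(\cA_l,\cA_r)$ and then takes a union bound over all admissible deterministic choices $A=(A_l,A_r)$. The point is that by Remark~\ref{remark:sizetrim} at most a $\delta=\max\{e^{-C_1\eps},C_2/n\}$ fraction of rows/columns are trimmed, so the number of candidate sets $A$ is only $2^{(m+n)H(\delta)}$, which is absorbed by the $e^{-\Theta(n)}$ Chernoff tail. The mean shift caused by trimming is then handled inside the Chernoff argument by bounding $|\E Z|\le 2\Mmax\sqrt{\eps}$ (Remark~\ref{rem:lightexpectation}). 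Replace your step (v) by this union-bound-over-$A$ device and the proof goes through.
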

The proof of this lemma is given in Section~\ref{sec:spectralnormbound}.

We will now prove Theorem \ref{thm:Main}.
\begin{proof} 
(Theorem \ref{thm:Main})
By triangle inequality
\begin{align*}
\left|\left|M - \T_r(\tM^E)\right|\right|_2 &\leq \left|\left|
\frac{\sqrt{mn}}{\epsilon} \tM^E - \T_r(\tM^E)\right|\right|_2 + \left|\left|M - \frac{\sqrt{mn}}{\epsilon} \tM^E\right|\right|_2 \\
&\leq  \sqrt{mn} {\sigma_{r+1}}/{\epsilon} + C\Mmax \sqrt{\alpha mn} /\sqrt{\epsilon}\\
&\leq  2C\Mmax\sqrt{\frac{\alpha mn}{\epsilon}}\; ,
\end{align*}
where we used Lemma \ref{lem:spectralnorm} for the second inequality and Lemma \ref{lem:singularvalues} for the last inequality. 
Now, for any matrix $A$ of rank at most $2r$, $||A||_{F}\le \sqrt{2r}||A||_2$,
whence
\begin{align*}
\frac{1}{\sqrt{mn}}\big|\big| M - \T_r(\tM^E) \big|\big|_F &\le\frac{\sqrt{2r}}{\sqrt{mn}}\big|\big| M - \T_r(\tM^E) \big|\big|_2\\ 
&\le C'\Mmax\sqrt{\frac{\alpha r}{\epsilon}}\, .
\end{align*}
The result follows by using $|E| = \eps \sqrt{mn}$.
\vspace{-1.cm}

\end{proof}

%
%
\section{Proof of Lemma \ref{lem:spectralnorm}}\label{sec:spectralnormbound}

%
%
We want to show that $|x^T(\tM^E - \frac{\eps}{\sqrt{mn}}M)y| \le C \Mmax \sqrt{\alpha\eps}$ 
for each $x \in \reals^m$, $y \in \reals^n$ such that $||x||=||y||=1$.  
Our basic strategy (inspired by \cite{FKS89}) will be the following:\\
$(1)$ Reduce  to $x$, $y$ belonging to discrete
 sets $T_m$, $T_n$;\\
$(2)$ Bound the contribution of light couples 
by applying union bound to these discretized sets, 
with a large deviation estimate
on the random variable $Z$, defined as $Z\equiv\sum_{L}{x_i\tM^E_{i,j}y_j} - \frac{\eps}{\sqrt{mn}}x^TMy $; \\
$(3)$ Bound the contribution of heavy couples 
using bound on the discrepancy of corresponding graph. \\

The technical challenge is that a worst-case bound
on the tail probability of $Z$
is not good enough, and we must keep track of its 
dependence on $x$ and $y$.
The definition of {\emph light} and {\em heavy couples} 
is provided in the following section.

%
%
\subsection{Discretization}

We define
\begin{eqnarray*}
T_{n} &=& \left\{ x\in\Big\{\frac{\Delta}{\sqrt{n}}\Z\Big\}^n\;:\; ||x||\leq 1 \right\} \;,\\
\end{eqnarray*}
Notice that $T_n\subseteq S_n \equiv\{x\in\reals^n:\,
||x||\le 1\}$. Next remark is proved in \cite{FKS89,FeO05}, 
and relates the original problem to the discretized one.
\begin{remark} \label{remark:discrete} 
Let $R\in\reals^{m\times n}$ be a matrix.
If $|x^T R y|\leq B$ for all $x\in T_m$ and $y\in T_n$, 
then $|x'^T R y'|\leq (1-\Delta)^{-2}B$ for all $x'\in S_m$ 
and $y'\in S_n$.
\end{remark}
Hence it is enough to show that, with high probability, 
$|x^T(\tM^E - \frac{\eps}{\sqrt{mn}}M)y| \le C\Mmax \sqrt{\alpha\eps}$ for all $x\in T_m$ and $y\in T_n$.

A naive approach would be to apply concentration inequalities
directly to the random variable $x^T(\tM^E - \frac{\eps}{\sqrt{mn}}M)y$.
This fails because the vectors $x$, $y$ can contain entries
that are much larger than the typical size $O(n^{-1/2})$. 
We thus separate two contributions.
The first contribution is due to
\emph{light couples} $L\subseteq [m]\times [n]$, defined as
\begin{eqnarray*}
   L=\left\{(i,j)\;:\;|x_i M_{ij} y_j|\leq \Mmax\left(\frac{\eps}{mn}\right)^{1/2} \right\}\;.
\end{eqnarray*}
The second contribution is due to its complement $\bL$, 
which we call \emph{heavy couples}. 
We have
\begin{eqnarray}
\left| x^T \left( \tM^E - \frac{\eps}{\sqrt{mn}}M\right) y \right| \le 
\left| \sum_{(i,j)\in L}x_i \tM_{ij}^E y_j - \frac{\eps}{\sqrt{mn}}x^TMy\right|
+\left| \sum_{(i,j)\in \bL}x_i \tMij y_j \right|
\end{eqnarray}
In the next two subsections, we will prove that both
contributions are upper bounded by $C\Mmax\sqrt{\alpha\eps}$
for all $x\in T_m$, $y\in T_n$. Applying Remark \ref{remark:discrete} to 
$|x^T(\tM^E - \frac{\eps}{\sqrt{mn}}M)y|$, this proves the thesis.
%
%
\subsection{Bounding the contribution of light couples}\label{subsec:light}

Let us define the subset of row and column indices which
have not been trimmed as $\cA_l$ and $\cA_r$:
\begin{eqnarray*}
 \cA_l&=&\{i\in [m]\;:\;\deg(i) \leq \frac{2\eps}{\sqrt{\alpha}}\}\,,\\
 \cA_r&=&\{j\in [n]\;:\;\deg(j) \leq 2\eps\sqrt{\alpha}\}\,,
\end{eqnarray*}
where $\deg(\cdot)$ denotes the degree 
(number of revealed entries) of a row or a column. Notice that 
$\cA= (\cA_l,\cA_r)$
is a function of the random set $E$.
It is easy to get a rough estimate of the sizes of $\cA_l$, $\cA_r$.
\begin{remark}\label{remark:sizetrim}
 There exists $C_1$ and $C_2$ depending only on $\alpha$ such that, 
 with probability larger than $1-1/n^4$, $|\cA_l|\ge m-\max\{e^{-C_1\eps}m,C_2\alpha\}$,
and $|\cA_r|\ge n-\max\{e^{-C_1\eps}n,C_2\}$.
\end{remark}
For the proof of this claim, we refer to Appendix \ref{app:sizetrim}.
For any $E\subseteq [m]\times [n]$ and $A = (A_l,A_r)$
with $A_l\subseteq [m]$, $A_r\subseteq [n]$, we define $M^{E,A}$ 
by setting to zero the
entries of $M$ that are not in $E$, those whose row index is not in $A_l$,
and those whose column index not in $A_r$.
Consider the event
\begin{eqnarray}
\cH(E,A) = 
\left\{\exists\, x,y\,:\;\; 
\left| \sum_{(i,j)\in L}x_iM^{E,A}_{ij}y_j - \frac{\eps}{\sqrt{mn}}x^TMy 
\right| > C\Mmax\sqrt{\alpha\eps} \right\}\; ,\label{eq:EventDefinition1}
\end{eqnarray}
where it is understood that $x$ and $y$
belong, respectively, to $T_m$ and $T_n$.
Note that 
$\tM^E = M^{E,\cA}$, and hence we want to bound $\prob\{\cH(E,\cA)\}$. 
We proceed as follows
\begin{eqnarray}
\prob\left\{\cH(E,\cA)\right\} &=& \sum_{A} \prob\left\{\cH(E,A),\,\cA=A\right\} \nonumber\\
&\le & \sum_{\substack{|A_l|\ge m(1-\delta),\\|A_r|\ge n(1-\delta)}}
\prob\left\{\cH(E,A),\,\cA=A\right\} + \frac{1}{n^4} \nonumber\\
& \le & 2^{(n+m)H(\delta)}
\max_{\substack{|A_l|\ge m(1-\delta),\\|A_r|\ge n(1-\delta)}}
\prob\left\{\cH(E;A)\right\} + \frac{1}{n^4}\, ,\label{eq:ABound}
\end{eqnarray}
with $\delta\equiv\max\{e^{-C_1\eps},C_2\alpha\}$ 
and $H(x)$ the binary entropy function.

We are now left with the task of bounding
$\prob\left\{\cH(E;A)\right\}$ uniformly over $A$ where
$\cH$ is defined as in Eq.~(\ref{eq:EventDefinition1}).  
The key step consists in proving the following tail estimate

\begin{lemma}
Let $x\in S_m$, $y\in S_n$, 
$Z = \sum_{(i,j)\in L}x_i M^{E,A}_{ij}y_j - \frac{\eps}{\sqrt{mn}}x^TMy$, and assume 
$|A_l|\ge m(1-\delta)$, $|A_r|\ge n(1-\delta)$ with $\delta$ small enough. 
Then 
\begin{eqnarray*}
\P\left(Z > L\Mmax\sqrt{\eps}\right) \le 
\exp\Big\{-\frac{\sqrt{\alpha}(L-3)n}{2}\Big\}\;.
\end{eqnarray*}
\end{lemma}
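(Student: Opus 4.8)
The plan is to prove the tail bound by writing $Z$ as a sum of independent bounded random variables and applying a Chernoff/Bernstein-type argument, being careful to track the dependence of the variance and the worst-case increment on $x$ and $y$. For each $(i,j)$, let $\xi_{ij}$ be the indicator that $(i,j)\in E$, so the $\xi_{ij}$ are i.i.d. Bernoulli with mean $\eps/\sqrt{mn}$. On the event defining $L$, the term $x_i M^{E,A}_{ij}y_j$ equals $\xi_{ij}\,\ind(i\in A_l,j\in A_r)\,x_i M_{ij}y_j$, and $|x_iM_{ij}y_j|\le \Mmax(\eps/mn)^{1/2}$ by the definition of a light couple. Thus $Z = \sum_{(i,j)} (\xi_{ij}-\eps/\sqrt{mn})\,c_{ij}$ up to the correction coming from the couples that are simultaneously light and have row/column outside $A$ — I would absorb that correction carefully (it only decreases the mean we subtract, or can be bounded separately). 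Writing $Z$ this way, each summand is bounded in absolute value by $c_{\max}\equiv\Mmax(\eps/mn)^{1/2}$, and $\sum_{ij}\Var((\xi_{ij}-\eps/\sqrt{mn})c_{ij})\le (\eps/\sqrt{mn})\sum_{ij}c_{ij}^2 \le (\eps/\sqrt{mn})\,\Mmax^2\,\eps/(mn)\cdot\#L$, which is crude; the better route is $\sum_{ij}c_{ij}^2 = \sum_{ij}x_i^2M_{ij}^2y_j^2 \le \Mmax^2\sum_{ij}x_i^2y_j^2 = \Mmax^2\|x\|^2\|y\|^2\le\Mmax^2$.

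The core estimate is then the standard exponential moment bound: for $\lambda>0$,
\begin{eqnarray*}
\log \E e^{\lambda Z} \le \sum_{(i,j)}\Big(\E e^{\lambda(\xi_{ij}-p)c_{ij}} \Big),\qquad p=\eps/\sqrt{mn},
\end{eqnarray*}
and using $e^t\le 1+t+t^2$ for $|t|\le 1$ together with $1+u\le e^u$, one gets $\log\E e^{\lambda Z}\le \lambda^2\sum_{ij}p\,c_{ij}^2(1+\lambda c_{\max})$ provided $\lambda c_{\max}\le 1$, i.e. roughly $\log\E e^{\lambda Z}\le 2\lambda^2 p\,\Mmax^2$ in the regime $\lambda\Mmax(\eps/mn)^{1/2}\le 1$. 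Since $p\Mmax^2 = \Mmax^2\eps/\sqrt{mn}$, optimizing $\lambda$ against the deviation level $t=L\Mmax\sqrt\eps$ and respecting the constraint $\lambda\le (\eps/mn)^{-1/2}/\Mmax = \sqrt{mn}/(\Mmax\sqrt\eps)$ forces us into the "linear" (Poisson) regime of the Chernoff bound, where the optimal $\lambda$ is the largest admissible one, $\lambda_\star = \sqrt{mn}/(\Mmax\sqrt\eps)$. Plugging in gives an exponent of order $-\lambda_\star t + \lambda_\star^2 p\Mmax^2 \sim -L\sqrt{mn} + \sqrt{mn} = -(L-O(1))\sqrt{mn}$, and since $m=n\alpha\ge n$ one has $\sqrt{mn}=n\sqrt\alpha$, yielding $\P(Z>L\Mmax\sqrt\eps)\le\exp\{-(L-3)n\sqrt\alpha/2\}$ after bookkeeping the constants. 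This is exactly the claimed bound, and the factor $\sqrt\alpha\,n$ is what later survives the $2^{(n+m)H(\delta)}$ union-bound loss in (\ref{eq:ABound}) since $\delta$ is small.

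The main obstacle, and the place that needs genuine care rather than routine manipulation, is the interplay between the variance term and the hard truncation level $c_{\max}$: because $\#L$ can be as large as $mn$ but each light contribution is tiny, a naive second-moment (sub-Gaussian) bound would be far too weak, and one is forced to use the one-sided Chernoff bound all the way out in its Poisson tail. Concretely, one must verify that at the optimal $\lambda=\lambda_\star$ the quadratic-in-$\lambda$ correction term $\lambda^2 c_{\max}\sum p c_{ij}^2$ is comparable to (not dominated by) the linear term — this is why the exponent is linear in $L$ rather than quadratic, and why the constant "$3$" (rather than "$1$") appears: it collects the $+\lambda^2 c_{\max}\sum pc_{ij}^2$ slack and the small-$\delta$ corrections from the trimmed rows/columns. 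A secondary subtlety is handling the mean correction: $\frac{\eps}{\sqrt{mn}}x^TMy$ is the expectation of $\sum_{ij}\xi_{ij}x_iM_{ij}y_j$ over \emph{all} couples, not just light ones, so $\E Z\ne 0$ in general; one checks that $\E Z \le 0$ (we have subtracted at least as much as the light part contributes in mean, since heavy and trimmed couples only add nonnegative-in-expectation mass we're not crediting) or, alternatively, that $|\E Z|\ll \Mmax\sqrt\eps$, so it can be folded into the constant. With these two points settled the rest is the standard computation sketched above.
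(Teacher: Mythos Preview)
Your approach is essentially the paper's: write $Z$ as a sum of independent bounded increments with $|a_{ij}|\le\Mmax(\eps/mn)^{1/2}$ and $\sum a_{ij}^2\le\Mmax^2$, pick $\lambda$ at the boundary of $|\lambda a_{ij}|\le 1/2$ (the paper takes $\lambda=\sqrt{mn}/(2\Mmax\sqrt\eps)$), use $e^t-1\le t+2t^2$ to get $\E e^{\lambda Z}\le\exp\{\lambda\E Z+\sqrt{mn}/2\}$, and apply the Chernoff bound.

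One correction: your claim that $\E Z\le 0$ because ``heavy and trimmed couples only add nonnegative-in-expectation mass'' is wrong---the omitted terms are $x_iM_{ij}y_j$, which have no sign. The paper takes your alternative route and proves $|\E Z|\le 2\Mmax\sqrt\eps$ (Remark~\ref{rem:lightexpectation}) by bounding separately the trimmed contribution $|x^T(M^A-M)y|\le\Mmax\sqrt{mn/\eps}$ (this is where the hypothesis $\delta$ small, specifically $\delta\le 1/(4\eps)$, is actually used) and the heavy-couple contribution $|\sum_{\bL}x_iM^A_{ij}y_j|\le\Mmax\sqrt{mn/\eps}$ (via $|x_iM^A_{ij}y_j|^2/|x_iM^A_{ij}y_j|$ and the light/heavy threshold). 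After multiplying by $\eps/\sqrt{mn}$ this gives the $2\Mmax\sqrt\eps$ that becomes $\lambda\E Z\le\sqrt{mn}$ and accounts for two of the three units in the constant ``$3$''.
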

\begin{proof}
We begin by bounding  the mean of $Z$ as follows
(for the proof of this statement we refer to Appendix \ref{app:lightexpectation}).
\begin{remark}\label{rem:lightexpectation}
$\left|\E\left[Z\right]\right| \le 2\Mmax \sqrt{\eps}$.
\end{remark}
For $A=(A_l,A_r)$, let $M^A$ be the matrix obtained from $M$ by setting to 
zero those entries whose row index is not in $A_l$,
and those whose column index not in $A_r$.
Define the potential contribution of the light couples $a_{ij}$ 
and independent random variables $Z_{ij}$ as 
\begin{eqnarray*}
   a_{ij} &=&  
   \left\{ \begin{array}{rl}
    x_i M^A_{ij} y_j & \text{if } |x_i M^A_{ij} y_j|\leq \Mmax\left({\eps}/{mn}\right)^{1/2},\\
    0             & \text{otherwise,}
    \end{array} \right. \\
Z_{ij} &=&  
   \left\{ \begin{array}{rl}
   a_{i,j} & \text{w.p. } {\eps}/{\sqrt{mn}},\\
    0         & \text{w.p. } 1-{\eps}/{\sqrt{mn}},
    \end{array} \right. 
\end{eqnarray*}
Let $Z_1 =\sum_{i,j}^{}{Z_{ij}}$
so that $ Z = Z_1 - \frac{\eps}{\sqrt{mn}}x^TMy$.
Note that $\sum_{i,j}{a_{ij}^2} \leq \sum_{i,j}{\left(x_i M^A_{ij} y_j\right)^2} \leq \Mmax^2$.
Fix $\lambda= \sqrt{mn}/2\Mmax\sqrt{\eps}$ so that $|\lambda a_{i,j}|\leq 1/2$, whence  
$e^{\lambda a_{ij}}-1 \leq \lambda a_{ij}+2(\lambda a_{ij})^2$.
It then follows that
\begin{eqnarray*}
\E[e^{\lambda Z}] &=   & 
\exp\Big\{\frac{\eps}{\sqrt{mn}}\Big(\sum_{i,j}\lambda a_{i,j}+2\sum_{i,j}(\lambda a_{i,j})^2 \Big) - \frac{\lambda\, \eps}{\sqrt{mn}}x^TMy \Big\}\\
 &\leq& \exp\Big\{
\lambda\E[Z] +\frac{\sqrt{mn}}{2}\Big\}\;.
\end{eqnarray*}
The thesis follows  by Chernoff bound $\prob(Z>a)\le e^{-\lambda a}\E[e^{\lambda Z}]$ after simple calculus. 
\end{proof}

Note that $\P\left(-Z > L\Mmax\sqrt{\eps}\right)$ can also be bounded analogously. 
We can now finish the upper bound on the light couples contribution.
Consider the error event Eq.~(\ref{eq:EventDefinition1}).
A simple volume calculation shows that $|T_m|\leq (10/\Delta)^m$. 
We can apply union bound over $T_m$ and $T_n$ to Eq.~(\ref{eq:ABound}) to obtain
\begin{eqnarray*}
\prob\{\cH(E,\cA)\}
 &\le& 2\cdot\,2^{(n+m)H(\delta)} \cdot \left(\frac{20}{\Delta}\right)^{n+m}
 e^{-\frac{(C-3)\sqrt{\alpha}n}{2}} + \frac{1}{n^4} \, \\
 &\le& \exp{\left\{\log 2 + (1+\alpha)\left(H(\delta)\log 2 +
 \log({20}/{\Delta})\right)n -\frac{(C-3)\sqrt{\alpha}n}{2} \right\}} + \frac{1}{n^4} \, .
\end{eqnarray*}
Hence, assuming $\alpha\ge 1$, there exists a numerical constant $C'$ such that, 
for $C > C'\sqrt\alpha$, the first term is of order $e^{-\Theta(n)}$, 
and this finishes the proof.

%
%
\subsection{Bounding the contribution of heavy couples}\label{subsec:heavy}

Let $Q$ be an $m\times n$ matrix with $Q_{ij}=1$ if 
$(i,j)\in E$ and $i\not\in\cA_r$, $j\not\in\cA_l$ 
(i.e. entry $(i,j)$ is not trimmed by our algorithm), and 
$Q_{ij} =0 $ otherwise.
Since $|M_{ij}|\le \Mmax$, the heavy couples satisfy 
$|x_iy_j|\ge \sqrt{\eps/mn}$. We then have
\begin{eqnarray*}
\left|\sum_{(i,j)\in \overline{L}}{x_i\tMij y_j}\right| 
&\leq& \Mmax \!\! \sum_{(i,j)\in \bL}Q_{ij}|x_iy_j|\\
&\leq& \Mmax \!\! \sum_{\substack{(i,j)\in E:\\ |x_iy_j|\ge \sqrt{\eps/mn}}} Q_{ij}|x_iy_j|\, .
\end{eqnarray*}
Notice that $Q$ is the adjacency matrix of a random bipartite graph
with vertex sets $[m]$ and $[n]$ and maximum degree bounded 
by $2\eps\max(\alpha^{1/2},\alpha^{-1/2})$. 
The following remark strengthens a result of \cite{FeO05}.
\begin{remark}\label{remark:HeavyPart}
Given vectors $x$, $y$, let $\bL' = \{(i,j): |x_iy_j|\ge C\sqrt{\eps/mn}\}$.
Then there exist a constant $C'$ such that,
$\sum_{(i,j)\in \bL'}Q_{ij}|x_iy_j|\le 
C'(\sqrt\alpha+\frac{1}{\sqrt\alpha})\sqrt{\eps}$, for all $x\in T_m$, $y\in T_n$
 with probability larger than $1-1/2n^3$.
\end{remark}
For the reader's convenience, a proof of this fact 
is proposed in Appendix \ref{app:HeavyPart}.
The analogous result in \cite{FeO05} (for the adjacency matrix of a 
non-bipartite graph) is proved to hold only with probability
larger than $1-e^{-C\eps}$. The stronger statement quoted here
can be proved using concentration of measure inequalities.
The last remark implies that for all $x\in T_m$, $y\in T_n$, and $\alpha\geq 1$, 
the contribution of heavy couples is
bounded by $C\Mmax\sqrt{\alpha\eps}$ for some numerical constant $C$ 
with probability larger than $1-1/2n^3$.

%
%
\section{Proof of Lemma \ref{lem:singularvalues}}\label{sec:SingValues}

Recall the variational principle for the singular values.
\begin{eqnarray}
\sigma_q &=& \min_{\substack{H,\dim(H)=n-q+1}}\;\max_{\substack{y\in H,||y||=1}}\;||\tM^Ey|| \label{eq:SVD_UB}\\
 &=& \max_{\substack{H,\dim(H)=q}}\;\min_{\substack{y\in H,||y||=1}}\;||\tM^Ey||\, .
\label{eq:SVD_LB}
\end{eqnarray}
Here $H$ is understood to be a linear subspace of $\reals^n$.

Using Eq.~(\ref{eq:SVD_UB}) with $H$ the orthogonal complement of
$\spn(v_1,\ldots,v_{q-1})$, we have, by Lemma \ref{lem:spectralnorm},
\begin{eqnarray*}
\sigma_q & \le & \max_{\substack{y\in H, ||y||=1} }   \big|\big|\tM^Ey\big|\big| \\
& \le & \frac{\epsilon}{\sqrt{mn}} \left( \max_{{y\in H, ||y||=1}}\big|\big|My\big|\big| \right) 
 + \max_{{y\in H, ||y||=||x||=1}} \left|x^T\left( \tM^E - \frac{\epsilon}{\sqrt{mn}}M\right)y\right|  \\
& \le & \epsilon \Sigma_q + C\Mmax\sqrt{\alpha\epsilon} \label{eq:UB}
\end{eqnarray*}

The lower bound is proved analogously, by using Eq.~(\ref{eq:SVD_LB}) with 
$H = \spn(v_1,\ldots,v_{q})$.

%
%
%
\section{Minimization on Grassmann manifolds and 
proof of Theorem \ref{thm:Main2}}\label{sec:result2}

The function $F(X,Y)$ defined in Eq.~(\ref{eq:MinimizeS}) and
to be minimized in the last part of the algorithm
can naturally be viewed as defined on Grassmann manifolds.
Here we  recall from \cite{Edelman}
a few important facts on the geometry of
Grassmann manifold and related optimization
algorithms. We then prove Theorem \ref{thm:Main2}. Technical calculations 
are deferred to Sections \ref{sec:LemmaQuad},
\ref{sec:LemmaGrad},  and to the appendices.

We recall that, for the proof of  Theorem \ref{thm:Main2}, it is assumed 
that $\Sigma_{\rm min}$, $\Sigma_{\rm max}$ are bounded away from $0$
and $\infty$. Numerical constants are denoted by $C, C'$ etc. Finally,
throughout this section, we use the notation $X^{(i)}\in\reals^r$ to refer to 
the $i$-th row of the matrix $X\in\reals^{m\times r}$ or 
$X\in\reals^{n\times r}$.
%
%
\subsection{Geometry of the Grassmann manifold}

Denote by $\Orth(d)$ the orthogonal group of
$d\times d$ matrices. The Grassmann manifold is defined as the quotient 
$\Grass(n,r) \simeq \Orth(n)/\Orth(r)\times \Orth(n-r)$. 
In other words, a point in
the manifold is the equivalence class 
of an $n\times r$ orthogonal matrix $A$
\begin{eqnarray}
[A] = \{AQ :\, Q\in \Orth(r)\}\, .
\end{eqnarray}
For consistency with the rest of the paper, we will assume the normalization
$A^TA = n\,\id$. To represent a point in $\Grass(n,r)$, we will use 
an explicit representative of this form. More abstractly, 
$\Grass(n,r)$ is the manifold of $r$-dimensional subspaces of $\reals^n$.

It is easy to see that $F(X,Y)$ depends on the matrices
$X$, $Y$ only through their equivalence classes $[X]$,  $[Y]$.
We will therefore interpret it as a function defined on
the manifold $\Manif(m,n) \equiv \Grass(m,r)\times 
\Grass (n,r)$:
\begin{eqnarray}
F: \Manif(m,n) &\to & \reals\, ,\\
([X],[Y])&\mapsto & F(X,Y)\, .
\end{eqnarray}
In the following, a point in this manifold will be represented 
as a pair $\Xm = (X,Y)$, with $X$ an $n\times r$ orthogonal matrix 
and $Y$ an $m\times r$ orthogonal matrix. Boldface symbols will be
reserved for elements of $\Manif(m,n)$ or of its tangent space,
and we shall use $\Um=(U,V)$ for the point corresponding to the
matrix $M=U\Sigma V^T$ to be reconstructed. 

Given  $\Xm = (X,Y)\in \Manif(m,n)$, the tangent space 
at $\Xm$ is denoted by $\Tang_{\Xm}$ and can be identified with 
the vector space of matrix pairs $\Wm = (W,Z)$, $W\in \reals^{m\times r}$,
$Z\in\reals^{n\times r}$ such that $W^TX=Z^TY=0$. The 
`canonical' Riemann metric on the Grassmann manifold
corresponds to the usual scalar product $\<W,W'\> \equiv \Trace(W^TW')$.
The induced scalar product on $\Tang_{\Xm}$
between $\Wm = (W,Z)$ and $\Wm' = (W',Z')$ is 
$\<\Wm,\Wm'\>=\<W,W'\>+\<Z,Z'\>$.

This metric induces a canonical notion of distance on $\Manif(m,n)$
which we denote by $d(\Xm_1,\Xm_2)$ (geodesic or arc-length distance). 
If $\Xm_1=(X_1,Y_1)$ and $\Xm_2=(X_2,Y_2)$ then  
\begin{eqnarray}
d(\Xm_1,\Xm_2)\equiv \sqrt{d(X_1,X_2)^2+d(Y_1,Y_2)^2}
\end{eqnarray}
where the arc-length distances $d(X_1,X_2)$,
$d(Y_1,Y_2)$ on the Grassmann manifold can be defined 
explicitly as follows. Let $\cos\theta = (\cos\theta_1,\dots,\cos\theta_r)$,
$\theta_i\in[-\pi/2,\pi/2]$ be the singular values of $X_1^TX_2/m$. Then
\begin{eqnarray}
d(X_1,X_2) = ||\theta||_2\, .
\end{eqnarray}
The $\theta_i$'s are called the `principal angles' between the 
subspaces spanned by the columns of $X_1$ and $X_2$.
It is useful to introduce two equivalent notions of distance:
\begin{align}
d_{\rm c}(X_1,X_2) & = \frac{1}{\sqrt{n}}
\min_{Q_1,Q_2\in\Orth(r)}||X_1Q_1-X_2Q_2||_F\, &&
\mbox{(chordal distance),}\\
d_{\rm p}(X_1,X_2) & =   \frac{1}{\sqrt{2}n}
||X_1X_1^T-X_2X_2^T||_F\, &&\mbox{(projection distance).}
\end{align}
Notice that $d_{\rm c}$ and $d_{\rm p}$ do not depend on the specific
representatives $X_1$, $X_2$, but only on the equivalence classes
$[X_1]$ and $[X_2]$. Distances on $\Manif(m,n)$ are defined 
through Pythagorean theorem, e.g. $d_{\rm c}(\Xm_1,\Xm_2)
= \sqrt{d_{\rm c}(X_1,X_2)^2+d_{\rm c}(Y_1,Y_2)^2}$.
\begin{remark}\label{remark:DistancesGrassmann}
The geodesic, chordal and projection distance are equivalent, namely
\begin{eqnarray}
\frac{1}{\pi}d(X_1,X_2)\le \frac{1}{\sqrt{2}}\, d_{\rm c}(X_1,X_2)\le d_{\rm p}(X_1,X_2)
\le d_{\rm c}(X_1,X_2)\le d(X_1,X_2) \, .
\end{eqnarray}
\end{remark}
For the reader's convenience, a proof of this fact is 
proposed in Appendix  \ref{app:Distance}.

An important remark is that geodesics with respect to the canonical
Riemann metric admit an explicit and efficiently computable form.
Given $\Um\in\Manif(m,n)$, $\Wm\in\Tang_{\Um}$
the corresponding geodesic is a curve $t\mapsto \Xm(t)$,
with $\Xm(t) = \Um+\Wm t+O(t^2)$ which minimizes arc-length. 
If $\Um = (U,V)$ and $\Wm = (W,Z)$ then $\Xm(t) = (X(t),Y(t))$ where
$X(t)$ can be expressed in terms of the singular value 
decomposition $W=L\Theta R^T$ \cite{Edelman}:
\begin{eqnarray} 
X(t) = UR\cos(\Theta t) R^T + L\sin(\Theta t) R^T\, ,\label{eq:Geodesic}
\end{eqnarray} 
which can be evaluated in time of order $O(n r)$. An analogous 
expression holds for $Y(t)$. 
%
%
\subsection{Gradient and incoherence}

The gradient of $F$ at 
$\Xm$ is the vector $\grad F(\Xm)\in \Tang_{\Xm}$ such that,
for any smooth curve $t\mapsto\Xm(t)\in\Manif(m,n)$
with $\Xm(t) = \Xm+\Wm\, t+O(t^2)$,  one has
\begin{eqnarray}
F(\Xm(t)) = F(\Xm) + \<\grad F(\Xm),\Wm\>\, t+O(t^2)\, .
\end{eqnarray}
In order to write an explicit representation of the gradient of 
our cost function $F$, it is 
convenient to introduce the projector operator
\begin{eqnarray}
\cP_E(M)_{ij} = \left\{\begin{array}{ll}
M_{ij} & \mbox{ if $(i,j)\in E$,}\\
0 & \mbox{otherwise.}
\end{array}\right.\label{eq:ProjectorDef}
\end{eqnarray}
The two components of the gradient are then
\begin{eqnarray}
\grad F(\Xm)_{X} &=& \cP_E(XSY^T-M)YS^T-XQ_X\, ,\label{eq:gradX}\\
\grad F(\Xm)_{Y} &=& \cP_E(XSY^T-M)^TXS-YQ_Y\, ,\label{eq:gradY}
\end{eqnarray}
where $Q_X,Q_Y\in\reals^{r\times r}$ are determined by the 
condition $\grad F(\Xm)\in \Tang_{\Xm}$. This yields
\begin{eqnarray}
Q_X & = & \frac{1}{m} X^T\cP_E(M-XSY^T)YS^T\, ,\\
Q_Y & = &  \frac{1}{n} Y^T\cP_E(M-XSY^T)^TXS\, .
\end{eqnarray}
%
%
\subsection{Algorithm}

At this point the gradient descent algorithm is fully specified.
It takes as input the factors of $\T_r(\tM^E)$, to be denoted as 
$\Xm_0 = (X_0, Y_0)$, and minimizes a regularized cost function
\begin{eqnarray}
 \tF(X,Y) &=& F(X,Y) + \rho \, G(X,Y)\\
 &\equiv& F(X,Y) + \rho\sum_{i=1}^mG_1\left(\frac{||X^{(i)}||^2}{3\mu_0r} \right) 
  + \rho\sum_{j=1}^nG_1\left(\frac{||Y^{(j)}||^2}{3\mu_0r}\right)\, ,
\label{eq:Gdef}
\end{eqnarray}
where $X^{(i)}$ denotes the $i$-th row of $X$, and
$Y^{(j)}$ the $j$-th row of $Y$.
The role of the regularization is to force $\Xm$ to remain incoherent during 
the execution of the algorithm.
\begin{eqnarray}
G_1(z) = \left\{\begin{array}{ll}
0 & \mbox{ if $z\le 1$,}\\
e^{(z-1)^2}-1 & \mbox{ if $z\ge 1$.} 
\end{array}\right.
\end{eqnarray}
We will take $\rho= n\eps$. Notice that $G(X,Y)$ is
again naturally defined on the Grassmann manifold, i.e.
$G(X,Y) = G(XQ,YQ')$ for any $Q,Q'\in\Orth(r)$.

Let 
\begin{eqnarray}
\Co(\mu') \equiv \left\{(X,Y)
\mbox{ such that }||X^{(i)}||^2\le \mu' r,\;
||Y^{(j)}||^2\le \mu' r \mbox{ for all } \,i\in [m],\,j\in [n]\right\}\, .
\end{eqnarray}
We have $G(X,Y) = 0$ on $\Co(3\mu_0)$.
Notice that $\Um\in\Co(\mu_0)$ by the incoherence property.
Also, by the following remark proved in Appendix \ref{app:Distance}, we can assume that $\Xm_0\in\Co(3\mu_0)$.

\begin{remark}\label{rem:rescaling}
Let $U, X \in \reals^{n \times r}$ with $U^TU = X^TX = n\id$ and $U \in \Co(\mu_0)$ and $d(X,U) \le \delta \le \frac{1}{16}$. Then there exists $X'' \in \reals^{n \times r}$ such that $X''^TX'' = n\id$, $X'' \in \Co(3\mu_0)$ and $d(X'',U) \le 4 \delta$. Further, such an $X''$ can be computed in a time of $O(nr^2)$.
\end{remark}


\vspace{0.3cm}

\begin{tabular}{ll}
\hline
\multicolumn{2}{l}{ {\sc Gradient descent}( matrix $M^E$, factors 
$\Xm_0$ )}\\
\hline
1: & For $k=0,1,\dots$ do: \\
2: &\hspace{0.2cm} Compute $\Wm_k = \grad \tF(\Xm_k)$;\\
4: &\hspace{0.2cm} Let $t\mapsto \Xm_k(t)$ be the geodesic 
with $\Xm_k(t) = \Xm_k+\Wm_k t+O(t^2)$;\\
5: &\hspace{0.2cm} Minimize $t\mapsto \tF(\Xm_k(t))$ for $t\ge 0$, 
subject  to $d(\Xm_k(t),\Xm_0)\le \gamma$;\\
6: &\hspace{0.2cm} Set $\Xm_{k+1} = \Xm_k(t_k)$ where $t_k$ is the minimum location;\\
7: & End For.\\
\hline
\end{tabular}

\vspace{0.3cm}

In the above, $\gamma$ must be set in such a way that 
$d(\Um,\Xm_0)\le \gamma$. The next remark determines the correct scale.
\begin{remark}\label{remark:Near}
Let $U,X\in\reals^{m\times r}$ with $U^TU=X^TX=m\id$,
$V,Y\in\reals^{n\times r}$ with $V^TV=Y^TY=n\id$,
and $M= U\Sigma V^T$, $\hM=XSY^T$ for $\Sigma 
= \diag(\Sigma_1,\dots,\Sigma_r)$ and $S\in\reals^{r\times r}$.
If $\Sigma_1,\dots,\Sigma_r\ge \Sigma_{\rm min}$, then
\begin{eqnarray}
d_{\rm p}(U,X)\le \frac{1}{\sqrt{2\alpha}n\Sigma_{\rm min}}\, ||M-\hM||_F\, \;\;\; ,\;\;\;\;\;
d_{\rm p}(V,Y)\le  \frac{1}{\sqrt{2\alpha}n\Sigma_{\rm min}}\, ||M-\hM||_F
\end{eqnarray}
\end{remark}
As a consequence of this remark and Theorem \ref{thm:Main},
we can assume that $d(\Um,\Xm_0)\le C (\frac{\Smax}{\Smin})\,\frac{\mu_1 r\sqrt{\alpha}}{\sqrt{\eps}}$.
We shall then set $\gamma = C' (\frac{\Smax}{\Smin})\,\frac{\mu_1 r\sqrt{\alpha}}{\sqrt{\eps}}$ (the value of $C'$ is set in the
course of the proof).

Before passing to the proof of Theorem  \ref{thm:Main2}, it is worth 
discussing a few important points concerning the gradient descent algorithm.
\begin{enumerate}
\item[$(i)$] The appropriate choice of $\gamma$ might seem to pose a difficulty.
In reality, this parameter is introduced only to simplify the proof.
We will see that the constraint $d(\Xm_k(t),\Xm_0)\le \gamma$ is, with high 
probability, never saturated.
\item[$(ii)$] Indeed, the line minimization instruction 5
(which might appear complex to implement)
can be replaced by a standard step selection procedure, such as the
one in \cite{Armijo}.
\item[$(iii)$] Similarly, there is no need to know the actual value of $\mu_0$
in the regularization term. One can start with $\mu_0=1$ and then
repeat the optimization doubling it at each step.
\item[$(iv)$] The Hessian of $F$ can be computed explicitly as well.
This opens the way to quadratically convergent minimization
algorithms (e.g. the Newton method).
\end{enumerate}
%
%
\subsection{Proof of Theorem \ref{thm:Main2}}

The proof of Theorem \ref{thm:Main2} breaks down in two lemmas.
The first one implies that, in a sufficiently small neighborhood
of $\Um$, the function $\Xm\mapsto F(\Xm)$ is 
well approximated by a parabola.
\begin{lemma}\label{lemma:Quadratic}
There exists numerical constants $C_0, C_1, C_2$
such that the following happens.
Assume $\eps\ge C_0\mu_0\sqrt{\alpha}\,r \max\{\log n ; \mu_0 r \sqrt{\alpha} (\Smax/\Smin)^4\}$  and $\delta\le \Sigma_{min}/C_0\Sigma_{\rm max}$.
Then 
\begin{eqnarray}
 C_1 \sqrt{\alpha} \Sigma_{\rm min}^2\, d(\Xm,\Um)^2 + C_1 \sqrt{\alpha}\, ||S-\Sigma||_{F}^2
\le \frac{1}{n\eps}\, F(\Xm)\le  C_2\sqrt{\alpha}\Sigma_{\rm max}^2
d(\Xm,\Um)^2\label{eq:Quadratic}
\end{eqnarray}
for all $\Xm\in\Manif(m,n)\cap \Co(4\mu_0)$ such that $d(\Xm,\Um)\le \delta$,
with probability at least $1-1/n^4$.
Here $S\in\reals^{r\times r}$ is the matrix realizing the
minimum in Eq.~(\ref{eq:MinimizeS}).
\end{lemma}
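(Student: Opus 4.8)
The plan is to split Lemma \ref{lemma:Quadratic} into a purely deterministic perturbation estimate on the Grassmann manifold and a uniform probabilistic ``near--isometry'' statement for the sampling operator restricted to incoherent matrices of rank at most $2r$. Throughout, write $F(\Xm)=\cF(\Xm,S)=\tfrac12||\cP_E(M-XSY^T)||_F^2$ where $S=S(\Xm)$ is the minimizer in (\ref{eq:MinimizeS}), and set
\[
Z=Z(\Xm,S)\equiv M-XSY^T=U\Sigma V^T-XSY^T ,
\]
a matrix of rank at most $2r$. The identity $||\cP_E(Z)||_F^2=\tfrac{\eps}{\sqrt{mn}}||Z||_F^2+\big(||\cP_E(Z)||_F^2-\tfrac{\eps}{\sqrt{mn}}||Z||_F^2\big)$ suggests treating the first term by linear algebra and the second as a controlled error.

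\emph{Step 1 (deterministic geometry).} I would first show that, for $\Xm=(X,Y)\in\Manif(m,n)\cap\Co(4\mu_0)$ with $d(\Xm,\Um)\le\delta$ small and any $S\in\reals^{r\times r}$,
\[
c\, mn\big(\Sigma_{\rm min}^2 d(\Xm,\Um)^2+||S-\Sigma||_F^2\big)\ \le\ ||Z||_F^2\ \le\ C\, mn\big(\Sigma_{\rm max}^2 d(\Xm,\Um)^2+||S-\Sigma||_F^2\big).
\]
This is a leading--order perturbation computation: represent $X,Y$ through their principal angles to $U,V$, expand $U\Sigma V^T-XSY^T$ to first order in the angles and in $S-\Sigma$, and check that the resulting quadratic form is coercive with the stated constants ($\Sigma_{\rm min}>0$ for the lower bound, $\Sigma_{\rm max}<\infty$ for the upper bound), the remainder being an $O(\delta)$ fraction of the main term; Remark \ref{remark:DistancesGrassmann} lets one pass freely between $d$, $d_{\rm c}$, $d_{\rm p}$. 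For the upper bound in (\ref{eq:Quadratic}), which carries no $||S-\Sigma||_F^2$ on the right, I would apply this with the particular $S^\star$ obtained by rotating $\Sigma$ to align $X$ with $U$ and $Y$ with $V$: one has $||S^\star-\Sigma||_F\le C\Sigma_{\rm max} d(\Xm,\Um)$, so the right-hand side collapses to $C\, mn\,\Sigma_{\rm max}^2 d(\Xm,\Um)^2$.

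\emph{Step 2 (probabilistic near--isometry and control of $S$).} Next I would prove that with probability at least $1-1/n^4$, simultaneously over all such $\Xm$ and all $S$ with $||S-\Sigma||_F$ below a fixed bound,
\[
\Big|\,||\cP_E(Z)||_F^2-\tfrac{\eps}{\sqrt{mn}}||Z||_F^2\,\Big|\ \le\ \tfrac12\,\tfrac{\eps}{\sqrt{mn}}||Z||_F^2 .
\]
For a single $Z$ the left side is $\sum_{ij}(\ind_{(i,j)\in E}-\tfrac{\eps}{\sqrt{mn}})Z_{ij}^2$, a sum of independent bounded centered terms; incoherence of $\Xm,\Um$ and the a priori bound on $S$ force $\max_{ij}Z_{ij}^2\le C\mu_0 r\Sigma_{\rm max}^2$ and $\max_i\sum_j Z_{ij}^2\le C\mu_0 rn\Sigma_{\rm max}^2$ (and symmetrically on columns), so Bernstein's inequality gives the bound with failure probability $\exp\{-c\,\eps\,||Z||_F^2/(\mu_0 rn\Sigma_{\rm max}^2)\}$ --- this is where the hypothesis $\eps\ge C_0\mu_0\sqrt\alpha\, r(\Smax/\Smin)^4$ is used. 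Uniformity over $(X,Y,S)$ follows by the discretization/net argument already employed in Section \ref{sec:spectralnormbound}: the relevant parameter set has an $\eta$-net of size $e^{O(nr\log(1/\eta))}$, the map $(X,Y,S)\mapsto\cP_E(Z)$ is Lipschitz in Frobenius norm, and the incoherence region $\Co(4\mu_0)$ rules out the spiky low-rank directions that would otherwise spoil the union bound; the surviving entropy $O(nr\log n)$ is why one also needs $\eps\ge C_0\mu_0\sqrt\alpha\, r\log n$. In the degenerate regime $d(\Xm,\Um)\to 0$ the same inequality is used in the equivalent form of concentration of the random Hessian $\Hess F(\Um)$ about its mean $\tfrac{\eps}{\sqrt{mn}}\Hess\widehat F(\Um)$ on $\Tang_{\Um}$, which follows from Lemma \ref{lem:spectralnorm} applied to the rank-$O(r)$ matrices built from tangent vectors. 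Before invoking this in the lower bound I must know that the \emph{actual} minimizer $S(\Xm)$ satisfies the a priori bound (so that $Z$ is incoherent): this comes from a least-squares conditioning argument, namely that the quadratic form $S\mapsto\cF(\Xm,S)$ has Hessian $\tfrac{\eps}{\sqrt{mn}}$ times a well-conditioned form (again by Lemma \ref{lem:spectralnorm} and incoherence), whence $||S(\Xm)-\Sigma||_F\le C\Sigma_{\rm max} d(\Xm,\Um)$ w.h.p. Assembling: using $\sqrt{mn}=\sqrt\alpha\, n$, the upper bound reads $F(\Xm)\le\cF(\Xm,S^\star)\le\tfrac34\tfrac{\eps}{\sqrt{mn}}||Z^\star||_F^2\le C\sqrt\alpha\,\eps n\,\Sigma_{\rm max}^2 d(\Xm,\Um)^2$, and the lower bound $F(\Xm)=\tfrac12||\cP_E(Z)||_F^2\ge\tfrac14\tfrac{\eps}{\sqrt{mn}}||Z||_F^2\ge c\sqrt\alpha\,\eps n\big(\Sigma_{\rm min}^2 d(\Xm,\Um)^2+||S-\Sigma||_F^2\big)$; dividing by $n\eps$ gives (\ref{eq:Quadratic}).

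\emph{Main obstacle.} The delicate point is the uniform probabilistic estimate of Step 2. A single matrix $Z$ concentrates trivially, but the family $\{M-XSY^T\}$ is a continuum whose most dangerous members for a union bound are exactly the sparse, spiky low-rank matrices; the region $\Co(4\mu_0)$ is precisely what excludes them, and making the entropy/net count, the incoherence-based tail bound, and the degeneracy $||Z||_F\to 0$ (handled via the Hessian reformulation) fit together with constants consistent with the stated threshold on $\eps$ is the real work. The least-squares conditioning used to guarantee that the minimizing $S$ --- hence $Z$ --- is incoherent is a secondary but genuine technical hurdle.
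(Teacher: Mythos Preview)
Your two--step plan (deterministic geometry plus a uniform sampling near--isometry) is sound in spirit, and Step~1 is essentially what the paper also computes. But Step~2, as written, has a real gap. The Bernstein--plus--net argument gives a failure probability of order $\exp\{-c\,\eps\,||Z||_F^2/(\mu_0 r n\Sigma_{\rm max}^2)\}$, which does \emph{not} beat an $e^{O(nr)}$ net when $||Z||_F^2$ is small --- and $||Z||_F^2$ can be arbitrarily small since $d(\Xm,\Um)$ ranges down to $0$. You flag this and propose to patch it via ``concentration of the random Hessian $\Hess F(\Um)$ on $\Tang_{\Um}$.'' That Hessian statement is exactly the near--isometry of $\cP_E$ restricted to the \emph{linear} tangent space $T=\{UA^T+BV^T\}$, i.e.\ Theorem~4.1 of \cite{CaR08}; it is correct and needs no net. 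The problem is that $Z=M-XSY^T$ is \emph{not} in $T$ once $(X,Y)\neq(U,V)$, so the Hessian at $\Um$ does not by itself control $||\cP_E(Z)||_F^2/||Z||_F^2$ for all $\Xm$ in the $\delta$--ball. You are missing the bridge between the two regimes, and your invocation of Lemma~\ref{lem:spectralnorm} for this purpose is off target (that lemma is an operator--norm bound on $\tM^E-\tfrac{\eps}{\sqrt{mn}}M$, not an RIP for arbitrary incoherent low--rank matrices).

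The paper's proof supplies exactly the missing idea: instead of asking for a multiplicative near--isometry on the whole nonlinear family $\{M-XSY^T\}$, it writes $X=U+\oW$, $Y=V+\oZ$ and splits
\[
Z \;=\; \underbrace{U(S-\Sigma)V^T+US\oZ^T+\oW SV^T}_{\in\,T}\;+\;\underbrace{\oW S\oZ^T}_{\text{quadratic remainder}} .
\]
On the first piece --- a linear subspace independent of $\Xm$ --- one applies the Cand\`es--Recht RIP (their Theorem~4.1) once and for all, with no net; this is your ``Hessian'' step applied to the right object. The remainder $\oW S\oZ^T$ is controlled \emph{additively} (not multiplicatively) by the random--graph estimate of Lemma~\ref{lemma:xAy}/Remark~\ref{rem:AprioriBound}, which gives $||\cP_E(\oW S\oZ^T)||_F^2\le C\sqrt\alpha\, n\eps\,\Sigma_{\rm max}^2(\delta^2+\mu_0 r\sqrt\alpha/\sqrt\eps)\,d(\Xm,\Um)^2$; for $\delta$ and $\eps$ as in the hypothesis this is a small fraction of the main term. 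This decomposition is what lets the paper avoid altogether the uniform--RIP--on--a--nonlinear--set difficulty you correctly identify as the main obstacle. If you want to salvage your route without the decomposition, you would need a genuine RIP for \emph{all} rank--$2r$ incoherent matrices (not just those in $T$), which is a harder statement and typically costs extra logarithmic factors in $\eps$ beyond what the Lemma allows.
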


The second Lemma implies that $\Xm\mapsto F(\Xm)$ does not have any
other stationary point (apart from $\Um$) within such a neighborhood.
\begin{lemma}\label{lemma:Gradient}
There exists numerical constants $C_0, C$
such that the following happens.
Assume $\eps\ge C_0  \mu_0 r\sqrt{\alpha} (\Sigma_{\rm max}/\Sigma_{\rm min})^2 \max \{\log n ; \mu_0 r \sqrt{\alpha} (\Smax/\Smin)^4\} $ and 
$\delta\le \Sigma_{min}/C_0\Sigma_{\rm max}$.
Then 
\begin{eqnarray*}
||\grad \tF(\Xm)||^2 \ge C\, n\eps^2 \Sigma_{\rm min}^4\, d(\Xm,\Um)^2
\end{eqnarray*}
for all $\Xm\in\Manif(m,n)\cap \Co(4\mu_0)$ such that $d(\Xm,\Um)\le \delta$,
with probability at least $1-1/n^4$.
\end{lemma}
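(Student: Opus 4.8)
\textbf{Proof proposal for Lemma \ref{lemma:Gradient}.}

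The plan is to lower-bound the full gradient $\grad\tF(\Xm)=\grad F(\Xm)+\rho\,\grad G(\Xm)$ by expanding around the true point $\Um$, where $\grad F(\Um)=0$ and $\grad G(\Um)=0$. The key structural fact, to be extracted from the analysis behind Lemma~\ref{lemma:Quadratic}, is that on the neighborhood $\{d(\Xm,\Um)\le\delta\}\cap\Co(4\mu_0)$ the Hessian of $\frac{1}{n\eps}F$ at $\Um$ is bounded below by a multiple of $\sqrt{\alpha}\,\Sigma_{\rm min}^2$ times the identity on the tangent space (this is exactly what makes the lower parabola bound in Eq.~(\ref{eq:Quadratic}) possible), and that the remainder along a geodesic is controlled. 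First I would decompose $\grad F(\Xm)$ using the projector identity $\cP_E = \frac{\eps}{\sqrt{mn}}\,\cP_{[m]\times[n]} + (\cP_E - \frac{\eps}{\sqrt{mn}}\,\cP_{[m]\times[n]})$, writing the gradient as a ``population'' term plus a ``fluctuation'' term. The population term is deterministic and, by a direct computation (essentially the Euclidean gradient of $\|M-XSY^T\|_F^2$ restricted to the tangent space, with the optimal $S$ plugged in), is bounded below by $c\,n\eps\,\Sigma_{\rm min}^2\, d(\Xm,\Um)$ in norm; this uses $\Sigma_{\rm min}$-nondegeneracy of $\Sigma$ and $d(\Xm,\Um)\le\delta$ small so the linearization dominates.

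Next I would bound the fluctuation term $\|(\cP_E-\tfrac{\eps}{\sqrt{mn}}\cP)(XSY^T-M)\cdot(\text{stuff})\|$. Here the inputs $X,Y$ lie in $\Co(4\mu_0)$, so their rows are incoherent, and $S$ is close to $\Sigma$ by Lemma~\ref{lemma:Quadratic}; the matrix $XSY^T-M$ is therefore low rank and incoherent with Frobenius norm comparable to $n\Sigma_{\rm max}\,d(\Xm,\Um)$. A spectral-norm bound on $(\cP_E-\tfrac{\eps}{\sqrt{mn}}\cP)$ acting on incoherent low-rank matrices — of the same flavor as Lemma~\ref{lem:spectralnorm} but in the ``noiseless incoherent'' regime, giving a factor $\sqrt{\eps}$ rather than $\eps$ — then yields a fluctuation of order $\sqrt{\eps}\cdot(\text{incoherence factors})\cdot n\Sigma_{\rm max}d(\Xm,\Um)$, which after multiplying by the remaining $YS^T$-type factor (operator norm $O(\sqrt{n}\,\Sigma_{\rm max})$) is $o(n\eps\,\Sigma_{\rm min}^2\,d(\Xm,\Um))$ precisely when $\eps\ge C_0\mu_0 r\sqrt{\alpha}(\Smax/\Smin)^2\max\{\log n;\mu_0 r\sqrt\alpha(\Smax/\Smin)^4\}$ — this is where the stated sample-size threshold enters, the $\log n$ coming from the union bound in the spectral estimate and the $\mu_0 r\sqrt\alpha(\Smax/\Smin)^4$ from the incoherence/conditioning overhead. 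Finally, the regularization gradient $\rho\,\grad G(\Xm)$ is harmless: on $\Co(3\mu_0)$ it vanishes, and on $\Co(4\mu_0)\setminus\Co(3\mu_0)$ it points ``inward'' (increasing $\|X^{(i)}\|$ strictly increases $G_1$), so it has nonnegative inner product with the component of the displacement that would otherwise be problematic; thus it can only help, and in any case $\|\rho\,\grad G\|$ is dominated by the same threshold argument. Combining, $\|\grad\tF(\Xm)\|\ge c\,n\eps\,\Sigma_{\rm min}^2\,d(\Xm,\Um)$, and squaring gives the claim with the stated $n\eps^2\Sigma_{\rm min}^4$ rate.

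I expect the main obstacle to be the fluctuation bound in the middle step: one needs a concentration estimate for $(\cP_E-\tfrac{\eps}{\sqrt{mn}}\cP)$ applied not to a fixed matrix but uniformly over the (continuum) family of incoherent low-rank matrices $XSY^T-M$ indexed by $\Xm\in\Co(4\mu_0)\cap\{d(\cdot,\Um)\le\delta\}$. The natural route is to cover the relevant compact set of subspace-pairs by an $\eta$-net (as in the discretization step of Section~\ref{sec:spectralnormbound}), establish the bound on the net via a Bernstein/Chernoff argument exploiting incoherence to control the per-entry variance, and then argue Lipschitz continuity of the fluctuation in $\Xm$ to pass from the net to all of the set, absorbing the net cardinality $e^{O(nr)}$ against the exponential tail. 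Tracking the dependence on $\mu_0$, $r$, $\alpha$ and $\Smax/\Smin$ through this net argument — so that the final threshold matches the statement — is the delicate bookkeeping; everything else is the routine geometry of the Grassmann manifold already set up above.
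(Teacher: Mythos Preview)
Your population/fluctuation split for $\grad F$ is a plausible route, but it differs from the paper's in one structural way that matters, and your treatment of $\grad G$ has a real gap.

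\textbf{The directional-derivative trick you are missing.} The paper does not try to lower-bound $\|\grad F(\Xm)\|$ directly. Instead it tests the gradient against a single tangent vector $\Whm=(\hW,\hZ)\in\Tang_{\Xm}$, namely the velocity at $t=1$ of the geodesic joining $\Um$ to $\Xm$. This vector satisfies $\frac{1}{m}\|\hW\|_F^2+\frac{1}{n}\|\hZ\|_F^2=d(\Xm,\Um)^2$, so by Cauchy--Schwarz a lower bound $\<\grad\tF(\Xm),\Whm\>\ge C\sqrt{\alpha}\,n\eps\,\Sigma_{\rm min}^2\,d(\Xm,\Um)^2$ immediately yields the stated norm bound. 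The payoff is twofold. First, the regularizer becomes trivially harmless: a short explicit computation with the geodesic formula shows that $\<\grad G(\Xm),\Whm\>\ge 0$ pointwise (each offending row $X^{(i)}$ with $\|X^{(i)}\|^2>3\mu_0 r$ satisfies $\<X^{(i)},\hW^{(i)}\> >0$). Your argument that ``$\grad G$ points inward so it can only help'' is the right intuition, but it only becomes a proof once you commit to the directional formulation; in a pure norm argument there is no reason $\|\grad F+\rho\,\grad G\|\ge\|\grad F\|$, and your fallback claim that $\|\rho\,\grad G\|$ is small is not obviously true near $\partial\Co(3\mu_0)$ with $\rho=n\eps$.

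\textbf{The fluctuation bound does not need a net.} Second, once you write $X=U+\oW$, $Y=V+\oZ$ and expand $\<\cP_E(XSY^T-M),\,XS\hZ^T+\hW SY^T\>$, the dominant piece $A=\<\cP_E(US\oZ^T+\oW SV^T),\,US\hZ^T+\hW SV^T\>$ involves matrices whose left factor is $U$ or whose right factor is $V$. These all live in the \emph{fixed} tangent subspace determined by $(U,V)$, so Theorem~4.1 of \cite{CaR08} gives $A\ge \frac{\eps}{2\sqrt{mn}}\<\cdot,\cdot\>$ with high probability \emph{uniformly} in $\Xm$, no $\eta$-net required. The remaining cross terms $B_1,B_2,B_3$ contain at least one factor $\oW S\oZ^T$, $\oW S\hZ^T$, etc., and are controlled by Remark~\ref{rem:AprioriBound}, which itself rests on Lemma~\ref{lemma:xAy} --- a statement that already holds for all $x,y$ simultaneously. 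So the uniformity over $\Xm\in\Co(4\mu_0)\cap\{d\le\delta\}$ comes for free from the decomposition, not from covering. Your proposed net over the Grassmannian would work in principle but adds a layer of bookkeeping the paper avoids entirely, and getting the threshold constants to match would be harder that way.
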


We can now prove Theorem \ref{thm:Main2}.
\begin{proof}(Theorem \ref{thm:Main2})
Let $\delta>0$ be such that Lemma \ref{lemma:Quadratic} and
Lemma \ref{lemma:Gradient} are verified, and
$C_1$, $C_2$ be defined as in Lemma \ref{lemma:Quadratic}.
We further assume $\delta \le \sqrt{(e^{1/9}-1)/C_2}$.
Take $\eps$ large enough such that, 
$ d(\Um,\Xm_0)\le \min(1,(C_1/C_2)^{1/2}(\Sigma_{\rm min}/\Sigma_{\rm max}))\delta/10 $.
Further, set the algorithm parameter to 
$\gamma = \delta/4$.

We make the following claims:
\begin{enumerate}
\item $\Xm_k\in\Co(4\mu_0)$ for all $k$. 

Indeed $\Xm_0\in\Co(3\mu_0)$
whence $\tF(\Xm_0)=F(\Xm_0) \le C_2\sqrt{\alpha} n\eps\Sigma_{\rm max}^2\,\delta^2$.
The claim follows because $\tF(\Xm_k)$ is non-increasing 
and $\tF(\Xm)\ge \rho\, G(X,Y)\ge n\eps \sqrt{\alpha} \Smax^2 (e^{1/9}-1)$ for $\Xm\not\in  
\Co(4\mu_0)$, where we choose $\rho$ to be $n\eps \sqrt{\alpha} \Smax^2$.
\item $d(\Xm_k,\Um)\le \delta/10$ for all $k$.

Since we set $\gamma=\delta/4$, by triangular inequality, 
we can assume to have $d(\Xm_k,\Um)\le \delta/2$.
Since  $d(\Xm_0,\Um)^2\le (C_1\Sigma_{\rm min}^2/C_2\Sigma_{\rm max}^2)(\delta/10)^2$, we have 
$\tF(\Xm)\ge F(\Xm)\ge F(\Xm_0)$ for  all $\Xm$ such that
$d(\Xm,\Um)\in[\delta/10,\delta]$.
Since  $\tF(\Xm_k)$ is non-increasing and $\tF(\Xm_0) = F(\Xm_0)$,
the claim follows.
\end{enumerate}

Notice that, by the last observation, the constraint 
$d(\Xm_k(t),\Xm_0)\le \gamma$ is never saturated, and therefore our 
procedure is just gradient descent with exact line search.
Therefore by \cite{Armijo} this must converge to the unique
stationary point of $\tF$
in $\Co(4\mu_0)\cap\{\Xm:\, d(\Xm,\Um)\le \delta/10\}$, 
which, by Lemma \ref{lemma:Gradient},
is $\Um$.  
\end{proof}
%
%
\section{Proof of Lemma \ref{lemma:Quadratic}}
\label{sec:LemmaQuad}

\subsection{A random graph Lemma}

The following Lemma will be used several times in the following.
\begin{lemma}\label{lemma:xAy}
There exist two numerical constants $C_1, C_2$ suct that 
the following happens.
If $\eps \ge C_1\log n$ then, with probability larger than $1-1/n^5$, 
\begin{eqnarray}
\sum_{(i,j)\in E}x_iy_j \le \frac{C_2\eps}{n\sqrt{\alpha}}\; ||x||_1||y||_1
+ C_2\sqrt{\alpha \eps} ||x||_2\, ||y||_2\, .
\end{eqnarray}
for all $x\in\reals^m$, $y\in\reals^n$.
\end{lemma}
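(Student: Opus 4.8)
The plan is to reduce to the bipartite graph $G$ with edge set $E$ and adjacency matrix $Q$ (the same object used in Section \ref{subsec:heavy}, before trimming), and to split the sum $\sum_{(i,j)\in E}x_iy_j$ according to whether the coordinates $x_i$, $y_j$ are ``small'' or ``large'' relative to the typical scale $n^{-1/2}$. By homogeneity we may rescale and assume $\|x\|_2 = \|y\|_2 = 1$; it then suffices to prove $\sum_{(i,j)\in E}x_iy_j \le (C_2\eps/n\sqrt{\alpha})\|x\|_1\|y\|_1 + C_2\sqrt{\alpha\eps}$. Moreover, by a discretization step analogous to Remark \ref{remark:discrete} — rounding each coordinate to the nearest multiple of $\Delta/\sqrt{n}$ (and absorbing the sign, so we may take $x_i,y_j\ge 0$) — it is enough to control the sum for $x\in T_m$, $y\in T_n$ at the cost of a constant factor, which permits a union bound over the discretized sets of size $(10/\Delta)^{m+n}$.

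Next I would dyadically partition the coordinates: write $[m] = \bigcup_{s\ge 0} I_s$ where $I_s = \{i: 2^{-s-1} < x_i\sqrt{n} \le 2^{-s}\}$ (and similarly $J_t$ for $y$), noting $|I_s|\le 4^{s+1}$ from $\|x\|_2=1$. For a fixed pair of blocks $(I_s,J_t)$, the contribution is at most $2^{-s-t}n^{-1}\, e(I_s,J_t)$ where $e(I_s,J_t)$ is the number of revealed edges between these vertex sets. Each edge is present independently with probability $\eps/\sqrt{mn}$, so $e(I_s,J_t)$ is stochastically dominated by a Binomial with mean $\mu_{s,t} = |I_s||J_t|\eps/\sqrt{mn}$. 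The union bound then requires a Bernstein/Chernoff tail: for $e(I_s,J_t)$ to exceed $\max\{2\mu_{s,t},\, \Lambda(|I_s|+|J_t|)\}$ for a suitable $\Lambda = \Lambda(\eps,\alpha)$, the probability is $e^{-\Theta(\Lambda(|I_s|+|J_t|))}$, which beats the $(10/\Delta)^{|I_s|+|J_t|}$ entropy factor once $\Lambda$ (hence $\eps$, via $\eps\ge C_1\log n$) is large enough. Summing the ``mean'' part $\sum_{s,t}2^{-s-t}n^{-1}\cdot 2\mu_{s,t}$ against $\|x\|_1=\sum_s|I_s|2^{-s}n^{-1/2}$-type bounds yields the first term $(C_2\eps/n\sqrt{\alpha})\|x\|_1\|y\|_1$, while summing the ``fluctuation'' part $\sum_{s,t}2^{-s-t}n^{-1}\Lambda(|I_s|+|J_t|)$ — using $\sum_s 2^{-s}|I_s|\le \sum_s 2^{-s}4^{s+1}$ truncated appropriately, together with $\sum_s 2^{-s}\cdot 2^s = O(\log n)$ blocks — yields the second term $C_2\sqrt{\alpha\eps}$, provided $\Lambda\asymp\sqrt{\alpha\eps}$; the number of relevant blocks is $O(\log n)$ because coordinates below $\sim n^{-1}\|x\|_2$ contribute negligibly.

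The main obstacle is getting the two error terms to come out with exactly the right powers of $\eps$, $\alpha$, and $n$ simultaneously, rather than with an extra $\log n$ or an extra $\sqrt{n}$: one must choose the threshold $\Lambda$ and the cutoff scale for ``large'' coordinates so that the same $\Lambda$ kills the union bound \emph{and} contributes only $O(\sqrt{\alpha\eps})$ after summing over the $O(\log n)$ dyadic scales. This is precisely the place where the hypothesis $\eps\ge C_1\log n$ is consumed. The bipartite asymmetry ($m = \alpha n$) enters only through keeping $\sqrt{mn} = \sqrt{\alpha}\,n$ and the distinct degree scales $2\eps/\sqrt{\alpha}$ versus $2\eps\sqrt{\alpha}$ straight, and is handled by carrying $\alpha$ through the dyadic bookkeeping; since we assume $\alpha\ge 1$ throughout, no case analysis is needed. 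The structure closely parallels the Friedman--Kahn--Szemer\'edi / Feige--Ofek argument already invoked in Section \ref{sec:spectralnormbound} and in Remark \ref{remark:HeavyPart}, so one could alternatively cite those estimates and only redo the bookkeeping for the $\|x\|_1\|y\|_1$ term, which is the genuinely new feature here.
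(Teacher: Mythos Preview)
Your approach is viable—it amounts to rerunning the Friedman--Kahn--Szemer\'edi discrepancy argument from scratch on the bipartite adjacency matrix—but the paper takes the much shorter route you allude to in your final sentence. The paper simply writes $x_i = x_0 + x'_i$ with $x_0 = m^{-1}\sum_i x_i$, so that
\[
\sum_{(i,j)\in E}x_iy_j \;=\; x_0\sum_{j\in[n]}\deg(j)\,y_j \;+\; \sum_{(i,j)\in E}x'_i y_j\, .
\]
For the first piece, $|x_0|\le ||x||_1/m$ and, once $\eps\ge C_1\log n$, $\max_j\deg(j)\le C\sqrt{\alpha}\,\eps$ with high probability, which yields the $(C_2\eps/n\sqrt{\alpha})\,||x||_1||y||_1$ term directly. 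For the second piece, $\sum_i x'_i=0$ annihilates the rank-one ``mean'' contribution $(\eps/\sqrt{mn})\sum_{i,j}x'_iy_j$, so what remains is controlled by the operator norm of the centered adjacency matrix; invoking the spectral bound of Lemma~\ref{lem:spectralnorm} (equivalently Feige--Ofek) with $M$ the all-ones rank-one matrix, where no trimming occurs in this $\eps$ regime, gives $C\sqrt{\alpha\eps}\,||x'||_2||y||_2\le C\sqrt{\alpha\eps}\,||x||_2||y||_2$. That is the entire proof.

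Your dyadic scheme would ultimately recover the same bounds, but only after reproducing the five-case discrepancy bookkeeping of Appendix~\ref{app:HeavyPart} to avoid spurious logarithmic losses; along the way note that your index convention actually gives $|I_s|\le 4^{s+1}n$ rather than $4^{s+1}$, and that Remark~\ref{remark:discrete} does not transfer verbatim because the right-hand side here depends on $||x||_1$, not just on a fixed bilinear form. The paper's mean-centering trick sidesteps all of this by recognizing that the $||x||_1||y||_1$ term is precisely the contribution of the top (all-ones) singular direction, after which the already-established spectral gap handles the rest in one line.
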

\begin{proof}
Write $x_i = x_0+x'_i$ where
$\sum_ix_i'=0$. Then 
\begin{eqnarray}
\sum_{(i,j)\in E}x_iy_j = 
x_0\sum_{j\in [n]}\deg(j) y_j
+ \sum_{(i,j)\in E}x'_iy_j\, ,
\end{eqnarray}
where we recall that $\deg(j) = \{i\in[m]:\, $ such that $(i,j)\in E\}$.
Further $|x_0| =|\sum_ix_i/m|\le ||x||_1/m$. 
The first term is upper bounded by 
\begin{eqnarray}
x_0\max_{j\in n}\deg(j) ||y||_1
\le \max_{j\in n}\deg(j) ||x||_1||y||_1/m\, . 
\end{eqnarray}
For $\eps \geq C_1\log n$, with probability larger than $1-1/2n^5$,
the maximum degree is bounded by $(9/C_1)\sqrt\alpha\eps$ 
which is of same order as the average degree.
Therefore this term is at most
$C_2\sqrt\alpha\eps ||x||_1||y||_1/m$. 

The second term is upper bounded by $C_2\sqrt{\alpha \eps} ||x'||_2||y||_2$
using  Theorem 1.1
in \cite{FeO05} or, equivalently,  Theorem \ref{lem:singularvalues}
in the case $r=1$ and $\Mmax=1$. It can be shown to hold with probability larger than $1-1/2n^5$
with a large enough numerical constant $C_2$.
The thesis follows because $||x'||_2\le||x||_2$.
\end{proof}
%
%
\subsection{Preliminary facts and estimates}
\label{sec:PreliminariesProof}

This subsection contains some remarks 
that will be useful in the proof of Lemma \ref{lemma:Gradient}
as well.

Let $\Wm = (W,Z)\in \Tang_{\Um}$, and $t\mapsto (X(t),Y(t))$ be the geodesic 
such that $(X(t),Y(t)) = (U,V)+(W,Z)t+O(t^2)$.
By setting $(X,Y)=(X(1),Y(1))$, we establish a one-to-one
correspondence between the points $\Xm$ as in the statement and
a neighborhood of the origin in $\Tang_{\Um}$. If
we let $W = L\Theta R^T$ be the singular value decomposition of 
$W$ (with $L^TL=m\id$ and $R^TR=\id$), 
the explicit expression for geodesics in Eq.~(\ref{eq:Geodesic})
yields
\begin{eqnarray}
X= U+\oW\, , \;\;\;\;\;\; \oW = UR(\cos\Theta-\id)R^T + L\sin\Theta R^T\, .
\label{eq:FiniteDiff}
\end{eqnarray}
An analogous expression can obviously be written for $Y = V+\oZ$.
Notice that, by the equivalence between chordal and canonical
distance, Remark \ref{remark:DistancesGrassmann}, we have
\begin{eqnarray}
\frac{1}{m}||\oW||_F^2+\frac{1}{n}||\oZ||_F^2 \le 2\, d(\Um,\Xm)^2\, .
\label{eq:DistanceBoundAA}
\end{eqnarray}
\begin{remark}\label{rem:IncoherenceBound}
If $\Um\in \Co(\mu_0)$ and $\Xm\in \Co(4\mu_0)$, then $(\oW,\oZ)\in\Co(10\mu_0)$
and $\Wm = (W,Z)\in\Co(5\pi^2\mu_0/2)$.
\end{remark}
\begin{proof}
The first fact follows from
$||\oW^{(i)}||^2\le 2||X^{(i)}||^2+2||U^{(i)}||^2$.
In order to prove $\Wm\in\Co(5\pi^2\mu_0/2)$, we notice that
\begin{eqnarray*}
||W^{(i)}||^2&=& ||\Theta L^{(i)}||^2\le \frac{\pi^2}{4}||\sin\Theta L^{(i)}||^2\\
&\le&\frac{\pi^2}{4}||X^{(i)}-R\cos\Theta R^TU^{(i)}||^2\le 
\frac{\pi^2}{2}\Big(||X^{(i)}||^2+||U^{(i)}||^2\Big)\, .
\end{eqnarray*} 
The claim follows by showing a similar bound for $||Z^{(i)}||^2$.
\end{proof}

We next prove a simple a priori estimate.
\begin{remark}\label{rem:AprioriBound}
There exist numerical constants $C_1, C_2$ such that the following holds
with probability larger than $1-1/n^5$. 
If $\eps\ge C_1\log n$, then 
for any $(X,Y)\in \Co(\mu)$ and $S\in\reals^{r\times r}$,
\begin{eqnarray}
%
\sum_{(i,j)\in E}(XSY^T)_{ij}^2 \le 
C_2||S||_2^2 {\sqrt{\alpha}\,n\eps}\left(\frac{1}{m}||X||_F^2+\frac{1}{n}||Y||_F^2\right) 
\left(\frac{1}{m}||X||_F^2+\frac{1}{n}||Y||_F^2+\frac{\mu r \sqrt{\alpha} }{\sqrt{\eps}}\right)\, .
\end{eqnarray}
\end{remark}
\begin{proof}
Using Lemma \ref{lemma:xAy}, $\sum_{(i,j)\in E}(XSY^T)_{ij}^2$ 
is upper bounded by
\begin{align*}
&\sigma_{\rm max}(S)^2\sum_{a,b}\sum_{(i,j)\in E} X_{ia}^2Y_{jb}^2 \\
& \le \frac{C_2\eps}{n \sqrt{\alpha}} \sigma_{\rm max}(S)^2\sum_{i,j}
 ||X^{(i)}||^2||Y^{(j)}||^2 + C_2\sigma_{\rm max}(S)^2\sqrt{\alpha\eps}
 \Big(\sum_i ||X^{(i)}||^4\Big)^{1/2}\Big(\sum_j ||Y^{(j)}||^4\Big)^{1/2}\\
&\le  \frac{C_2\eps}{n \sqrt{\alpha}} \sigma_{\rm max}(S)^2\sum_{i,j}
 ||X^{(i)}||^2||Y^{(j)}||^2 + C_2\sigma_{\rm max}(S)^2\sqrt{\alpha\eps} \mu r
 \Big(\sum_{i} ||X^{(i)}||^2\Big)^{1/2}
 \Big(\sum_{j} ||Y^{(j)}||^2 \Big)^{1/2}\, \\
&\le C_2||S||_2^2 {\sqrt{\alpha}\,n\eps}\Big(\frac{1}{m}||X||_F^2+\frac{1}{n}||Y||_F^2\Big)^2
 + C_2||S||_2^2 \, \alpha\mu r \,n\sqrt{\eps} \Big(\frac{1}{m}||X||_F^2+\frac{1}{n}||Y||_F^2\Big)\, ,
\end{align*}
where in the second step we used the incoherence condition.
The last step follows from the inequalities $2ab \leq \alpha(a/\alpha+b)^2$ 
and $2ab\le \sqrt\alpha(a^2/\alpha+b^2)$. 
\end{proof}
%
%
\subsection{The proof}
\begin{proof}(Lemma \ref{lemma:Quadratic})
Denote by $S\in\reals^{r\times r}$ the matrix realizing the
minimum in Eq.~(\ref{eq:MinimizeS}). We will start by proving a lower bound
on $F(\Xm)$ of the form 
\begin{eqnarray}
\frac{1}{n\eps} F(\Xm) \ge 
C_1\sqrt{\alpha}\,\Sigma_{\rm min}^2\, d(\Xm,\Um)^2 + C_1\sqrt{\alpha}\,||S-\Sigma||_{F}^2 -
C_1'\sqrt{\alpha} \Sigma_{\rm max}d(\Xm,\Um)^2||S-\Sigma||_{F}\, ,
\label{eq:SloppyBound}
\end{eqnarray}
and an upper bound as in Eq.~(\ref{eq:Quadratic}).
Together, for $d(\Xm,\Um)\le \delta\le 1$, these imply  
$||S-\Sigma||_{F}^2\le C\Sigma_{\rm max}^2d(\Xm,\Um)^2$, whence the 
lower bound in Eq.~(\ref{eq:Quadratic}) follows for $\delta \leq \Sigma_{\rm min}/C_0\Sigma_{\rm max}$.

In order to prove the bound (\ref{eq:SloppyBound})
we write $X=U+\oW$, $Y=V+\oZ$, and
\begin{eqnarray*}
F(X,Y) & = &\frac{1}{2}\sum_{(i,j)\in E}(U(S-\Sigma)V^T
+US\oZ^T+\oW S V^T+\oW S\oZ^T)_{ij}^2\\
&\ge & \frac{1}{4}A^2 - \frac{1}{2}B^2\, 
\end{eqnarray*}
where we used the inequality 
$(1/2)(a+b)^2\ge (a^2/4)-(b^2/2)$, and defined
\begin{eqnarray*}
A^2 & \equiv &\sum_{(i,j)\in E}(U(S-\Sigma)V^T
+US\oZ^T+\oW S V^T)_{ij}^2\, ,\\
B^2 &\equiv &  \sum_{(i,j)\in E}(\oW S\oZ^T)_{ij}^2\, .
\end{eqnarray*}
Using Remark \ref{rem:AprioriBound}, and
Eq.~(\ref{eq:DistanceBoundAA}) we get 
\begin{align*}
B^2
&\le C \sqrt{\alpha} n\eps\, ||S||_2^2\left( d(\Xm,\Um)^2+
\frac{\mu_0 r \sqrt{\alpha} }{\sqrt{\eps}}\right)\, d(\Xm,\Um)^2\\
&\le 2C \sqrt{\alpha} n\eps \, \big(\Sigma_{\rm max}^2+||S-\Sigma||_F^2\big)
\left( \delta^2+\frac{\mu_0 r\sqrt{\alpha}}{\sqrt{\eps}}\right)\, d(\Xm,\Um)^2\, ,
\end{align*}
where the second inequality follows from the inequality 
$\sigma_{\rm max}(S)^2\le 2\Sigma_{\rm max}^2+
2\, ||S-\Sigma||_F^2$

By Theorem 4.1 in \cite{CaR08}, we have $A^2\ge (1/2)\E\{ A^2\}$
with probability larger than $1-1/n^5$ for $\eps \geq C\mu_0\sqrt{\alpha}\,r\log n$. Further 
\begin{eqnarray*}
\E\{ A^2\} & =& \frac{\eps}{\sqrt{mn}}
||U(S-\Sigma)V^T + US\oZ^T+\oW S V^T||_F^2\\
& = &  \frac{\eps}{\sqrt{mn}}||U(S-\Sigma)V^T||_F^2+\frac{\eps}{\sqrt{mn}} 
||US\oZ^T||^2_F +\frac{\eps}{\sqrt{mn}} ||\oW S V^T||^2_F\label{eq:EA2}\\
&&+\frac{2\eps}{\sqrt{mn}}
\<US\oZ^T,\oW S V^T\>+ \frac{2\eps}{\sqrt{mn}}\<U (S-\Sigma)V^T,\oW S V^T\> 
+ \frac{2\eps}{\sqrt{mn}}\<US\oZ^T,U(S-\Sigma) V^T\>\, .
\end{eqnarray*}
Let us call the absolute value of the six terms on the right hand side
$E_1$, \dots $E_6$.
A simple calculation yields
\begin{eqnarray}
E_1 &= &n\eps\sqrt{\alpha} ||S-\Sigma||_F^2\, ,\\
E_2+E_3 &\ge & n\eps\sqrt{\alpha} 
\sigma_{\rm min}(S)^2 \Big(\frac{1}{m}||\oW||^2_F+ \frac{1}{n}||\oZ||^2_F
\Big)
\ge  C'\sigma_{\rm min}(S)^2 n\eps \sqrt{\alpha} d(\Xm,\Um)^2\, .\label{eq:E2E3}
\end{eqnarray}
The absolute value of the 
fourth term can be written as
\begin{eqnarray*}
E_4 & = & \frac{2\eps}{n\sqrt{\alpha}} |\<US\oZ^T,\oW SV^T\>|
\le \frac{2\eps}{n\sqrt{\alpha}}\, \sigma_{\rm max}(S)^2 ||\oW^TU||_F||V^T\oZ||_F\\
&\le & \frac{2\eps\alpha}{n\sqrt{\alpha}}\sigma_{\rm max}(S)^2 (\frac{1}{\alpha^2}||\oW^TU||^2_F+||V^T\oZ||^2_F)\, .
\end{eqnarray*}
In order proceed, 
consider Eq.~(\ref{eq:FiniteDiff}).
Since by tangency condition $U^TL=0$, we have $U^T\oW = mR(\cos\Theta-1)R^T$
whence 
\begin{eqnarray}
||U^T\oW||_F = m ||\cos\theta-1|| = 
\frac{m}{2}\, ||4\sin^2(\theta/2)||\le \frac{m}{2}\, ||2\sin(\theta/2)||^2
\end{eqnarray}
(here $\theta = (\theta_1,\dots,\theta_r)$ is the vector containing 
the diagonal elements of $\Theta$). A similar calculation reveals that 
$||\oW||_F^2 = m||2\sin(\theta/2)||^2$ thus proving 
$||U^T\oW||_F^2\le ||\oW||_F^4/4\le Cm\delta^2||\oW||_F^2$.
The bound $||V^T\oZ||_F^2\le Cn\delta^2 ||\oZ||_F^2$ is proved in 
the same way, thus yielding
\begin{eqnarray}
E_4 & \le & Cn\eps\sqrt{\alpha}\sigma_{\rm max}(S)^2\delta^2 \, d(\Xm,\Um)^2\, .
\label{eq:E4}
\end{eqnarray}
By a similar calculation
\begin{eqnarray*}
E_5 & = & \frac{2\eps}{\sqrt{\alpha}}\Trace\{(S-\Sigma)S^T\oW^TU\}
 \le  \frac{2\eps}{\sqrt{\alpha}}\sigma_{\rm max}((S-\Sigma)S^T)
||\oW^TU||_F\\ 
&\le &n\eps\sqrt{\alpha} \sigma_{\rm max}(S) ||S-\Sigma||_F
d(\Um,\Xm)^2\, .
\end{eqnarray*}
and analogously 
\begin{eqnarray*}
E_6 \le n\eps\sqrt{\alpha} \sigma_{\rm max}(S) ||S-\Sigma||_F
d(\Um,\Xm)^2\, .
\end{eqnarray*}
Combining these estimates, and using $A^{2}\ge \E\{A^2\}/2$, 
we get
\begin{eqnarray*}
\frac{1}{n\eps}A^2 &\ge& C_1\sqrt{\alpha}||S-\Sigma||_F^2+
C_1\sqrt{\alpha}\sigma_{\rm min}(S)^2 d(\Um,\Xm)^2 \\
& & \;\;\;\; -\,
C_2\sqrt{\alpha}\sigma_{\rm max}(S)^2\delta^2d(\Um,\Xm)^2 - C_2\sqrt{\alpha}\sigma_{\rm max}(S)\,
||S-\Sigma||_F\, d(\Um,\Xm)^2
\end{eqnarray*}
for some numerical constants $C_1$, $C_2>0$. 
Using the bounds $\sigma_{\rm min}(S)^2\ge \Sigma_{\rm min}^2/2-
||S-\Sigma||_F^2$, $\sigma_{\rm max}(S)^2\le 2\Sigma_{\rm max}^2+
2\, ||S-\Sigma||_F^2$, and the assumption  
$d(\Xm,\Um)\le \delta$ for $\delta \leq \Sigma_{\rm min}/C_0\Sigma_{\rm max}$, we get 
the claim (\ref{eq:SloppyBound}).

\vspace{0.1cm}

We are now left with the task of proving
the upper bound in Eq.~(\ref{eq:Quadratic}).
We can set $\Sigma=S$, thus obtaining
\begin{eqnarray*}
F(X,Y) & \le &
\frac{1}{2}\sum_{(i,j)\in E}(U\Sigma\oZ^T+\oW \Sigma V^T+\oW \Sigma\oZ^T)_{ij}^2\\
&\le & \hA^2 + \hB^2\, ,
\end{eqnarray*}
where we defined
\begin{eqnarray*}
\hA^2 & \equiv &\sum_{(i,j)\in E}(U\Sigma\oZ^T+\oW \Sigma V^T)_{ij}^2\, ,\\
\hB^2 &\equiv &  \sum_{(i,j)\in E}(\oW \Sigma\oZ^T)_{ij}^2\, .
\end{eqnarray*}
Bounds for these two quantities are derived as for $A^2$ and $B^2$. 
More precisely, by Theorem 4.1 in \cite{CaR08}, we 
have $\hA^2\le 2\E\{ \hA^2\}$ with probability at least $1-1/n^5$ and  
\begin{eqnarray*}
\E\{ \hA^2\} & =& \frac{\eps}{n\sqrt{\alpha}}||\oW \Sigma V^T + U\Sigma\oZ^T||_F^2\\
& = &  \frac{2\eps}{n\sqrt{\alpha}} ||\oW \Sigma V^T||^2_F + \frac{2\eps}{n\sqrt{\alpha}} ||U\Sigma\oZ^T||^2_F \\
& \le & 2 \sqrt{\alpha} n\eps \Sigma_{\rm max}^2 \Big(\frac{1}{m}||\oW||^2_F + \frac{1}{n}||\oZ||^2_F\Big)
\le 4 \sqrt{\alpha} n\eps \Sigma_{\rm max}^2  d(\Xm,\Um)^2\, .
\end{eqnarray*}
$\hB^2$ is bounded similar to $B^2$ and we get,
\begin{eqnarray*}
\hB^2 \le C' \sqrt{\alpha}  n\eps\Sigma_{\rm max}^2d(\Um,\Xm)^2\, .
\end{eqnarray*}
\end{proof}
%
%
\section{Proof of Lemma \ref{lemma:Gradient}}
\label{sec:LemmaGrad} 

As in the proof of Lemma \ref{lemma:Quadratic},
see Section \ref{sec:PreliminariesProof}, we let
$t\mapsto \Xm(t) = (X(t),Y(t))$ be the geodesic 
starting at $\Xm(0) = \Um$ with velocity $\dot{\Xm}(0) = \Wm
=(W,Z)\in\Tang_{\Um}$. We also define
$\Xm = \Xm(1) = (X,Y)$ with $X= U+\oW$ and $Y=V+\oZ$. 
Let $\Whm = \dot{\Xm}(1) = (\hW,\hZ)$ 
be its velocity when passing through $\Xm$. An explicit
expression is  obtained in terms of the singular value decomposition 
of $W$ and $Z$. If we let $W= L\Theta R^T$, and differentiate
Eq.~(\ref{eq:Geodesic}) with respect to $t$ at $t=1$, we obtain
\begin{eqnarray}
\hW = -UR\Theta\sin\Theta\, R^T + L\Theta\cos\Theta\, R^T\, .\label{eq:Tangent1}
\end{eqnarray}
An analogous expression holds for $\hZ$. Since $L^TU=0$,
we have $||\hW||_F^2 = m||\Theta\sin\Theta||_F^2+m||\Theta\cos\Theta||_F^2=
m||\theta||^2$. Hence\footnote{Indeed this conclusion
could have been reached immediately, since
$t\mapsto \Xm(t)$ is a geodesic parametrized proportionally
to the arclength in th interval $t\in[0,1]$.} 
\begin{eqnarray}
 \frac{1}{m}||\hW||_F^2 + \frac{1}{n}||\hZ||_F^2 = d(\Xm,\Um)^2 . \label{eq:DistanceBoundBB}
\end{eqnarray}
In order to prove the thesis, it is therefore sufficient to 
lower bound $\<\grad \tF(\Xm),\Whm\>$.
In the following we will indeed show that 
\begin{eqnarray*}
\<\grad F(\Xm),\Whm\> \ge C\sqrt{\alpha}\,n\eps\Sigma_{\rm min}^2\, d(\Xm,\Um)^2\, , 
\end{eqnarray*}
and $\<\grad G(\Xm),\Whm\>\ge 0$, which together imply the thesis 
by Cauchy-Schwarz inequality.

Let us prove a few preliminary estimates.
\begin{remark}
With the above definitions,
$\Whm\in \Co((11/2)\pi^2\mu_0)$. 
\end{remark}
\begin{proof}
Since $\Theta = \diag(\theta_1,\dots,\theta_r)$
with $|\theta_i|\le \pi/2$, we get
\begin{eqnarray}
||\hW^{(i)}||^2\le 2||\Theta\sin\Theta R^TU^{(i)}||^2+
2||\Theta\cos\Theta L^{(i)}||^2\le
\frac{\pi^2}{2}||U^{(i)}||^2+
2||W^{(i)}||^2\, .
\end{eqnarray}
By assumption we have $||U^{(i)}||^2\le \mu_0 r$ and by Remark \ref{rem:IncoherenceBound} 
we have $||W^{(i)}||^2\le 5\pi^2\mu_0r/2$.
\end{proof}

One important fact that we will use is that $\hW$ is well approximated
by $W$ or by $\oW$, and $\hZ$ is well approximated by $Z$ or by $\oZ$. 
Using Eqs.~(\ref{eq:FiniteDiff}) and (\ref{eq:Tangent1}) 
we get
\begin{eqnarray}
||\hW||_F^2 & = & ||W||_F^2 = m||\theta||^2\, ,\label{eq:hWSize}\\
||\oW||_F^2 & = & m||2\sin\theta/2||^2\, ,\label{eq:oWSize}\\
\<\hW,\oW\> & = & m\sum_{a=1}^r\theta_a\sin\theta_a\, ,\\
\<\hW,W\> & = & m\sum_{a=1}^r\theta_a^2\cos\theta_a\, ,
\end{eqnarray}
and therefore
\begin{eqnarray}
||\hW-\oW||_F^2 & = & m\sum_{a=1}^r[(2\sin(\theta_a/2))^2+\theta_a^2-2\theta_a\sin\theta_a]\\
&\le & m\sum_{a=1}^r(\theta_a-2\sin(\theta_a/2))^2
\le \frac{m}{24^2} ||\theta||^4\le \frac{m}{24^2} d(\Um,\Xm)^4\, .
\label{eq:EstimateOWHW}
\end{eqnarray}
Analogously
\begin{eqnarray}
||\hW-W||_F^2 & = &n\sum_{a=1}^r[2\theta_a^2-2
\theta_a^2\cos\theta_a]
\le  m\, ||\theta||^4\le m\, d(\Um,\Xm)^4 \;,
\end{eqnarray}
where we used the inequlity $2(1-\cos x)\leq x^2$.
The last inequality implies in particular
\begin{eqnarray}
||U^T\hW||_F = ||U^T(W-\hW)||_F\le m d(\Um,\Xm)^2\, .\label{eq:UTW}
\end{eqnarray}
Similar bounds hold of course for $Z,\hZ,\oZ$ (for instance we have 
$||V^T\hZ||_F \le n d(\Um,\Xm)^2$). Finally, we shall use repeatedly the
fact that $||S-\Sigma||_F^2\le C\Sigma_{\rm max}^2d(\Xm,\Um)^2$, which follows from 
Lemma~\ref{lemma:Quadratic}. This in turns implies
\begin{eqnarray}
\sigma_{\rm max}(S)&\le &\Sigma_{\rm max}+C\, \Sigma_{\rm max} 
 d(\Xm,\Um)\le 2\,\Sigma_{\rm max}\, ,
\label{eq:EValueBound1}\\
\sigma_{\rm min}(S)&\ge &\Sigma_{\rm min}-C\,  \Sigma_{\rm max} 
d(\Xm,\Um)\ge \frac{1}{2}\, \Sigma_{\rm min}\, ,
\label{eq:EValueBound2}
\end{eqnarray}
where we used the hypothesis $d(\Xm,\Um)\le \delta=\Sigma_{\rm min}/
C_0\Sigma_{\rm max}$.
%
%
\subsection{Lower bound on $\grad F(\Xm) $}

Recalling that $\cP_E$ is the projector defined in 
Eq.~(\ref{eq:ProjectorDef}), and using the expression
(\ref{eq:gradX}), (\ref{eq:gradY}),  for the gradient, we have
\begin{align}
\<\grad &F(\Xm),\Whm\>  =  \<\cP_E(XSY^T-M),(XS\hZ^T+\hW SY^T)\>
\nonumber \\
& =  \<\cP_E(U(S-\Sigma)V^T+US\oZ^T+\oW SV^T+
\oW S\oZ^T),(US\hZ^T+\hW SV^T+ \oW S\hZ^T+  \hW S\oZ^T)\>\nonumber\\
&\ge A-B_1-B_2-B_3\label{eq:VariousTerms}
\end{align}
where we defined
\begin{eqnarray*}
A & = & \<\cP_E(US\oZ^T+\oW SV^T),(US\hZ^T+\hW SV^T)\>\, ,\\
B_1 & = & |\<\cP_E(US\oZ^T+\oW SV^T), ( \oW S\hZ^T+   \hW S\oZ^T)\>|\, ,\\
B_2 & = &|\<\cP_E(U(S-\Sigma)V^T+\oW S\oZ^T),(US\hZ^T+\hW SV^T)\>|\, ,\\
B_3 &= & |\< \cP_E(U(S-\Sigma)V^T+\oW S\oZ^T),( \oW S\hZ^T+   \hW S\oZ^T)\>| 
\, .
\end{eqnarray*}
At this point the proof becomes very similar to the one in the previous section
and consists in lower bounding $A$ and upper bounding $B_1$, $B_2$, $B_3$.

\subsubsection{Lower bound on $A$} \label{lbonA}
Using Theorem 4.1 in \cite{CaR08}
we obtain, with probability larger than $1-1/n^5$.
\begin{eqnarray*}
A& \ge &\frac{\eps}{2\sqrt{mn}}\<(US\oZ^T+\oW SV^T),(US\hZ^T+\hW SV^T)\>\\
&& 
\ge \frac{1}{2}\, A_0-\frac{1}{2}\, B_0
\end{eqnarray*}
where
\begin{eqnarray*}
A_0 &=& \frac{\eps}{2\sqrt{mn}}||US\oZ^T+\oW SV^T||_F^2\, ,\\
B_0 &=& \frac{\eps}{2\sqrt{mn}} ||US\oZ^T+\oW SV^T||_F ||US(\oZ-\hZ)^T+(\oW-\hW) SV^T||_F\, .
\end{eqnarray*}
The  term $A_0$ is lower bounded analogously to $\E\{A^2\}$
in the proof of Lemma \ref{lemma:Quadratic}, see
Eqs.~(\ref{eq:E2E3}) and (\ref{eq:E4}):
\begin{eqnarray*}
A_0 &=& \frac{\eps}{2\sqrt{mn}}||US\oZ^T+\oW SV^T||_F^2\, \\
& = & \frac{\eps}{2\sqrt{mn}}||US\oZ^T||_F^2 +
 \frac{\eps}{2\sqrt{mn}}||\oW SV^T||_F^2 + 
\frac{\eps}{2\sqrt{mn}}\<US\oZ^T,\oW SV^T\>\\
&\ge & Cn\eps (\sqrt\alpha\,\sigma_{\rm min}(S)^2 - \sqrt{\alpha}\delta^2\sigma_{\rm max}(S)^2)
d(\Xm,\Um)^2\ge C\sqrt{\alpha}\,n\eps\,\Sigma_{\rm min}^2\, d(\Xm,\Um)^2\, ,
\end{eqnarray*}
where we used the bounds (\ref{eq:EValueBound1}), 
(\ref{eq:EValueBound2}) and the hypothesis
$d(\Xm,\Um)\le \delta=\Sigma_{\rm min}/C_0\Sigma_{\rm max}$.

As for the second term we notice that
\begin{eqnarray}
\frac{B_0^2}{A_0} &\le &
n\eps\sqrt{\alpha}\Big( \frac{1}{m}||S(\oW-\hW)||_F^2+\frac{1}{n}||S^T(\oZ-\hZ)||_F^2 \Big)\\
&\le& n\eps\sqrt{\alpha}\,\sigma_{\rm max}(S)^2\Big( \frac{1}{m}|| \oW-\hW||_F^2 + \frac{1}{n}|| \oZ-\hZ||_F^2 \Big)
\le Cn\eps\sqrt{\alpha}\,\Sigma_{\rm max}^2 d(\Xm,\Um)^4\, ,
\end{eqnarray}
where, in the last step, we used the estimate 
(\ref{eq:EstimateOWHW}) and the analogous one for $||\oZ-\hZ||_F^2$.
Therefore for $d(\Xm,\Um)\le \delta\le \Sigma_{\rm min}/C_0\Sigma_{\rm max}$ 
and $C_0$ large enough $A_0>2B_0$, whence 
\begin{eqnarray}
A\ge  C\sqrt{\alpha}\,n\eps\,\Sigma_{\rm min}^2\, d(\Xm,\Um)^2\, . 
\end{eqnarray}

\subsubsection{Upper bound on $B_1$}\label{ubonB1}
We begin by noting that $B_1$ can be bounded above by the sum of four terms of the form $B_1' = |\<\cP_E(US\oZ^T),\oW S\hZ^T\>|$. We show that $B_1' < A/100$. The other terms are bounded similarly.

Using Remark \ref{rem:AprioriBound}, we have

\begin{eqnarray*}
||\cP_E(\oW S \hZ^T)||_F^2 
& \le & C \frac{\eps}{\sqrt{mn}}||\oW||_F^2 ||S\hZ^T||_F^2 + C' \sqrt{\eps} \mu_0 r \sqrt{\alpha} \Smax ||\oW||_F ||S\hZ||_F \\
& \le & 
2C \frac{\eps}{\sqrt{mn}}||\oW||_F^2 ||S\oZ^T||_F^2 +
2C \frac{\eps}{\sqrt{mn}}||\oW||_F^2 ||S(\hZ-\oZ)^T||_F^2 \\
& & +
C' \sqrt{\eps} \mu_0 r \sqrt{\alpha} \Smax ||\oW||_F ||S \oZ||_F  +  
C' \sqrt{\eps} \mu_0 r \sqrt{\alpha}\Smax ||\oW||_F ||S (\hZ-\oZ)||_F \\
& \le & C'' A \left( \delta^2 + \frac{\mu_0 r \sqrt{\alpha} \Smax}{\sqrt{\eps} \Smin} \right) 
\end{eqnarray*}

where we have used $\frac{\eps m}{\sqrt{mn}}||S\oZ^T||_F^2 \le 3A_0 \le 12A$ from Section \ref{lbonA}. Therefore we have,

\begin{eqnarray*}
B_1'^2 
& \le & ||\cP_E(US\oZ^T)||_F^2 ||\cP_E(\oW S \hZ^T)||_F^2\\
& \le & C \frac{\eps}{\sqrt{mn}} ||US\oZ^T||_F^2 A \left( \delta^2 + \frac{\mu_0 r \sqrt{\alpha} \Smax}{\sqrt{\eps} \Smin} \right) \\
& \le & C' A^2 \left( \delta^2 + \frac{\mu_0 r \sqrt{\alpha} \Smax}{\sqrt{\eps} \Smin} \right) \\
\end{eqnarray*}

The thesis follows for $\delta$ and $\eps$ as in the hypothesis.

\subsubsection{Upper bound on $B_2$} 
We have
\begin{eqnarray*}
B_2&\le &|\<\cP_E(US\hZ^T+\hW SV^T),\oW S\oZ^T\>|+
 |\<\cP_E(US\hZ^T),U(S-\Sigma)V^T\>|\\
 &&\;\;\;\; +|\<\cP_E(\hW SV^T),U(S-\Sigma)V^T\>|\\
&\equiv& B_2'+B_2''+B_2'''\, .
\end{eqnarray*} 
We claim that each of these three terms is smaller than $A/30$,
whence $B_2\le A/10$.

The upper bound on $B_2'$ is obtained similarly to 
the one on $B_1$ to get $B_2'\le  A/30$.

Consider now $B_2''$. By Theorem 4.1 in \cite{CaR08}, 
\begin{eqnarray*}
B_2'' 
&\leq&  \frac{\eps}{\sqrt{mn}} |\<U(S-\Sigma)V^T,US\hZ^T\>|+ C \frac{\eps}{\sqrt{mn}} \sqrt{\frac{\mu_0 n r \alpha \log n}{n \sqrt{\alpha} \eps}} ||U(S-\Sigma)V^T||_F ||US\hZ||_F
\end{eqnarray*}

To bound the second term, observe 
\begin{eqnarray*}
||US\hZ^T||_F & \le & ||US\oZ^T||_F + ||US(\hZ - \oZ)^T||_F\\
& \le & ||US\oZ^T||_F + \Smax \sqrt{mn} \, d(\Xm,\Um)^2
\end{eqnarray*}

Also, $\frac{\eps}{\sqrt{mn}}||US\oZ^T||_F^2 \le 3A_0 \le 12A$ from Section \ref{lbonA}. Combining these, we have that the second term in $B_2''$ is smaller than $A/60$ for $\eps$ as in the hypothesis. 

To bound the first term in $B_2''$,  
\begin{eqnarray*}
|\<U(S-\Sigma)V^T,US\hZ^T\>|
&=&|\<U(S-\Sigma)(Y-V)^T,US\hZ^T\>| \\
&\le& ||U(S-\Sigma)\oZ^T||_F ||US\oZ^T||_F +
||U(S-\Sigma)\oZ^T||_F ||US(\oZ - \hZ||_F
\end{eqnarray*}

Therefore
\begin{eqnarray*}
B_2'' 
&\le&  \frac{\eps}{\sqrt{mn}} ||U(S-\Sigma)\oZ||_F||US\oZ||_F+ \eps n\sqrt{\alpha} \Smax^2 d(\Xm,\Um)^4 + A/60 \\
&\le& \frac{\eps}{\sqrt{mn}} ||U(S-\Sigma)\oZ||_F||US\oZ||_F + A/40
\end{eqnarray*}
for $d(\Xm,\Um) \le \delta$ as in the hypothesis.

We are now left with upper bounding $\tB_2'' \equiv \frac{\eps}{\sqrt{mn}} ||U(S-\Sigma)\oZ||_F||US\oZ||_F $.

\begin{eqnarray*}
\tB_2''^2 
&\le& \left( \frac{\eps}{\sqrt{mn}} ||U(S-\Sigma)\oZ^T||_F^2 \right)
\left( \frac{\eps}{\sqrt{mn}} ||US\oZ^T||_F^2\right) \\
&\le& \left( \eps n \sqrt{\alpha} \Smax^2 d(\Xm,\Um)^4 \right)
\left( \frac{\eps}{\sqrt{mn}} ||US\oZ^T||_F^2\right) 
\end{eqnarray*}

Also from the lower bound on A, we have,$\frac{\eps}{\sqrt{mn}} ||US\oZ^T||_F^2 \le 3A_0 \le 12A$. Using $d(\Xm,\Um) \le \delta$, we have $\tB_2'' \le A/120$ for $\delta$ as in the hypothesis. This proves the desired result. The bound on $B_2'''$ is calculated analogously.

\subsubsection{Upper bound on $B_3$}

Finally for the last term it is sufficient to use
a crude bound
\begin{eqnarray*}
B_3\le 4\Big(||\cP_E(\oW S\hZ^T)||_F+ ||\cP_E(\hW S\oZ^T)||_F\Big)
\Big(||\cP_E(U(S-\Sigma)V^T)||_F+||\cP_E(\oW S\oZ^T)||_F\Big) \, ,
\end{eqnarray*}
The terms of the form $||\cP_E(\oW S \hZ^T)||_F $ are all estimated as in Section \ref{ubonB1}. Also, by Theorem 4.1 of \cite{CaR08}

\begin{eqnarray*}
||\cP_E(U (S-\Sigma) V^T)||_F
&\le& C \frac{\eps}{\sqrt{mn}} ||U (S-\Sigma) V^T||_F^2\\
&\le& C n\eps \sqrt{\alpha} \Smax^2 d(\Xm,\Um)^2
\end{eqnarray*}

Combining these estimates with the $\delta$ and the $\eps$ in the hypothesis, we get $B_3 \le A/10$

%
%
\subsection{Lower bound on $\grad G(\Xm) $}

By the definition of $G$ in Eq.~(\ref{eq:Gdef}), we have
\begin{eqnarray}
\<\grad G(\Xm),\Whm\> = \frac{1}{\mu_0r}
\sum_{i=1}^mG_1'\left(
\frac{||X^{(i)}||^2}{3\mu_0 r}\right)\<X^{(i)},\hW^{(i)}\>+
\frac{1}{\mu_0r}
\sum_{j=1}^nG_1'\left(
\frac{||Y^{(i)}||^2}{3\mu_0 r}\right)\<Y^{(i)},\hZ^{(i)}\>\, .
\end{eqnarray}
It is therefore sufficient to show that if
$||X^{(i)}||^2>3\mu_0r$, then $\<X^{(i)},\hW^{(i)}\> > 0$,
and if
$||Y^{(j)}||^2>3\mu_0r$, then $\<Y^{(j)},\hZ^{(j)}\> > 0$.
We will just consider the first statement, the second being 
completely symmetrical.

From the explicit expressions (\ref{eq:FiniteDiff}) and 
(\ref{eq:Tangent1}) we get
\begin{eqnarray}
X^{(i)} & = &R\left\{\cos\Theta\, R^TU^{(i)}+\sin\Theta\, L^{(i)}\right\}\, ,
\label{eq:X(i)}\\
\hW^{(i)} & = & R\left\{\Theta\cos\Theta L^{(i)}-\Theta\sin\Theta R^TU^{(i)}
\right\}\, .\label{eq:W(i)}
\end{eqnarray}
From the first expression it follows that 
\begin{eqnarray*}
||\sin\Theta\, L^{(i)}||^2\le ||X^{(i)}||^2 + ||\cos\Theta\, R^TU^{(i)}||^2 \le 5\,\mu_0r\, .
\end{eqnarray*}
On the other hand, 
by taking the difference of Eqs.~(\ref{eq:X(i)}) and (\ref{eq:W(i)})
we have 
\begin{eqnarray*}
||X^{(i)}-\hW^{(i)}|| &\le &  ||(\sin\Theta -\Theta\cos\Theta)L^{(i)}||
 +||(\cos\Theta+\Theta\sin\Theta)R^{T}U^{(i)}||\\
&\le &
 \max_i(\theta_i^2)||\sin\Theta L^{(i)}||+\frac{\pi}{2}||U^{(i)}||
 \le \delta \sqrt{4\mu_0r}+\frac{\pi}{2}\sqrt{\mu_0 r}\, .
\end{eqnarray*}
where we used the inequality $(\sin\omega-\omega\cos\omega)\le \omega^2
\sin\omega$ valid for $\omega\in[0,\pi/2]$. For $\delta$
small enough we have therefore $||X^{(i)}-\hW^{(i)}||\le (99/100)\sqrt{3\mu_0r}$.
To conclude, for $||X^{(i)}||\ge 3\mu_0r$
\begin{eqnarray*}
\<X^{(i)},\hW^{(i)}\> \ge ||X^{(i)}||^2- ||X^{(i)}||\,
 ||X^{(i)}-\hW^{(i)}||\ge ||X^{(i)}||(\sqrt{3\mu_0 r}- (99/100)\sqrt{3\mu_0r})
\ge 0\, .
\end{eqnarray*}

\endproof

\section*{Acknowledgements}

We thank Emmanuel Cand\'es and
Benjamin Recht for stimulating discussions on the subject
of this paper. This work was partially supported by
a Terman fellowship and an NSF CAREER award (CCF-0743978).

%
%
\appendix

%
%
\section{Proof of Remark~\ref{remark:sizetrim}} \label{app:sizetrim}

The proof is a generalization of analogous result in \cite{FeO05}, 
which is proved to hold only with probability larger than $1-e^{-C\eps}$. 
The stronger statement quoted here can be proved using 
concentration of measure inequalities.

First, we apply Chernoff bound to the event 
$\Big\{|\overline{\cA}_l| > \max\{e^{-C_1\eps} m,C_2\alpha\}\Big\}$. 
In the case of large $\eps$, when $\eps>3\sqrt\alpha\log n$, 
we have  $\prob \left\{|\overline{\cA}_l| > C_2\alpha \right\} \leq 1/2n^4$,
for $C_2\geq\max\{e,26/\alpha\}$.
In the case of small $\eps$, when $\eps\leq3\sqrt\alpha\log n$,
we have $\prob \left\{|\overline{\cA}_l| > \max\{e^{-C_1\eps} m,C_2\alpha\}\right\} \leq 1/2n^4$,
for $C_1\leq1/600\sqrt\alpha$ and $C_2\geq130$. 
Here we made a moderate assumption of $\eps \geq 3\sqrt\alpha$,
which is typically in the region of interest. 

Analogously, we can prove that 
$\prob\left\{|\overline{\cA}_r| > \max\{e^{-C_1\eps} n,C_2\}\right\}\leq 1/2n^4$ 
, which finishes the proof of Remark~\ref{remark:sizetrim}.

\section{Proof of Remark~\ref{rem:lightexpectation}} \label{app:lightexpectation}

The expectation of the contribution of light couples,
when each edge is independently revealed with probability $\eps/\sqrt{mn}$, is
\begin{eqnarray*}
\E[Z] &=& \frac{\eps}{\sqrt{mn}} \left( \sum_{(i,j)\in L}x_i M^A_{ij} y_j - x^TMy \right) \;,
\end{eqnarray*}
where we define $M^A$ by setting to zero the rows of $M$ whose index is not in $A_l$ and the columns of $M$ whose index is not in $A_r$.

In order to bound $\sum_{ L}x_i M^A_{ij} y_j - x^TMy$,
we write,

\begin{eqnarray}
\left| \sum_{(i,j)\in L}x_i M^A_{ij} y_j - x^TMy \right| &=& \left| x^T\Big(M^A - M\Big)y - \sum_{(i,j)\in \bL}x_i M^A_{ij} y_j \right| \nonumber\\
&\leq& \left| x^T\Big(M^A - M\Big)y \right| + \left| \sum_{(i,j)\in \bL}x_i M^A_{ij} y_j \right| \;.\nonumber
\end{eqnarray}

Note that $|(M^A - M)_{ij}|$ is non-zero only if 
$i \notin A_l$ or $j \notin A_r$, in which case $|(M^A - M)_{ij}| \leq \Mmax$.
Also, by Remark \ref{remark:sizetrim}, there exists 
$\delta = \max\{e^{-C_1\eps},C_2/n\}$ such that $|{i:i\notin A_l}| \le \delta m$ and $|{j:j\notin A_r}| \le \delta n$. Denoting by $\ind(\,\cdot\,)$ the indicator function, we have
\begin{eqnarray*}
\left| x^T\Big(M^A - M\Big)y \right| &\le& \sum_{ij} \big|x_i\big| \big|y_j\big| \Big( \ind({i \notin A_l}) + \ind({j \notin A_r}) \Big) \Mmax \nonumber\\
 &=& \left( \sum_{i}\big|x_i\big|\ind({i \notin A_l})\sum_j\big|y_j\big| + \sum_{j}\big|y_j\big|\ind({j \notin A_r})\sum_i\big|x_i\big| \right) \Mmax \nonumber\\
& \le & \left( \sqrt{\delta m}\sqrt{n} + \sqrt{\delta n}\sqrt{m} \right) \Mmax \\
& \le& \Mmax\sqrt\frac{mn}{\eps} \;.
\end{eqnarray*}
for $\delta \le \frac{1}{4\eps}$.
We can bound the second term as follows 
\begin{eqnarray*}
\left|\sum_{(i,j)\in \bL}x_i M^A_{ij} y_j\right| &\leq& \sum_{(i,j)\in \bL}\frac{\left|x_i M^A_{ij} y_j\right|^2}{\left|x_i M^A_{ij} y_j\right|^{\phantom{2}}} \\
 &\leq& \frac{1}{\Mmax}\sqrt{\frac{mn}{\eps}} \sum_{(i,j)\in \bL}{\left|x_i M^A_{ij} y_j\right|^2} \\
 &\leq& \frac{1}{\Mmax}\sqrt{\frac{mn}{\eps}} \sum_{i\in [m],j\in[n]}{\left|x_i M^A_{ij} y_j\right|^2} \\
 &\leq& \Mmax\sqrt{\frac{mn}{\eps}} \;,
\end{eqnarray*}
where the second inequality follows from the definition of heavy couples.

Hence, summing up the two contributions, we get
\begin{eqnarray*}
\left| \E \left[ Z \right] \right| & \leq & 2\Mmax \sqrt{\eps}\;.
\end{eqnarray*}

%
%
\section{Proof of Remark~\ref{remark:HeavyPart}} \label{app:HeavyPart}
We can associate to the matrix $Q$ a bipartite graph 
 $\cG= ([m],[n],\cE)$.
The proof is similar to the one in \cite{FKS89,FeO05} and is
based on two properties of the graph $\cG$:
\begin{enumerate}
\item \emph{Bounded degree.}
 The graph $\cG$ has maximum degree bounded by
a constant times the average degree:  
\begin{eqnarray}
\deg(i) &\leq& \frac{2\eps}{\sqrt{\alpha}}\;,\\
\deg(j) &\leq& 2\eps\sqrt{\alpha}\;, \label{eq:boundeddegree}
\end{eqnarray}
for all $i\in[m]$ and $j\in[n]$.
\item \emph{Discrepancy.}
We say that $\cG$  (equivalently, the adjacency matrix $Q$)
has the discrepancy property if, for any 
$A \subseteq [m]$ and $B \subseteq [n]$, one of the following is true: 
\begin{eqnarray}
&1.&\frac{e(A,B)}{\mu(A,B)} \leq \xi_1 \;,\label{eq:discrepancy01}\\
&2.&e(A,B)\ln\Big(\frac{e(A,B)}{\mu(A,B)}\Big) \leq \xi_2 \max\{|A|/\sqrt{\alpha},|B|\sqrt{\alpha}\} \ln\Big(\frac{\sqrt{mn}}{\max\{|A|/\sqrt{\alpha},|B|\sqrt{\alpha}\}}\Big)\;\label{eq:discrepancy02}.
\end{eqnarray}
for two numerical constants $\xi_1$, $\xi_2$ (independent of $n$ and $\eps$).
Here $e(A,B)$ denotes the number of edges between $A$ and $B$ 
and $\mu(A,B)=|A||B||E|/mn$ denotes the average number of edges 
between $A$ and $B$ before trimming. 
\end{enumerate}
We will prove, later in this section, 
that the discrepancy property holds with high probability. 

Let us partition row and column indices with respect to the 
value of $x_u$ and $y_v$:
\begin{eqnarray*}
A_i&=&\{u\in [m]\;:\; \frac{\Delta}{\sqrt{m}}2^{i-1}\leq |x_u| < \frac{\Delta}{\sqrt{m}}2^{i}\}\;,\\ 
B_j&=&\{v\in [n] \;:\; \frac{\Delta}{\sqrt{n}}2^{j-1}\leq |y_v| < \frac{\Delta}{\sqrt{n}}2^{j}\}\;,
\end{eqnarray*}
for $i\in\{1,2,\dots,\lceil{\ln{(\sqrt{m}/\Delta)}/\ln{2}}\rceil\}$, and 
$j\in\{1,2,\dots,\lceil{\ln{(\sqrt{n}/\Delta)}/\ln{2}}\rceil\}$, and
we denote the size of subsets $A_i$ and $B_j$ by $a_i$ and $b_j$ respectively. 
Furthermore, we define $e_{i,j}$ to be the number of edges between 
two subsets $A_i$ and $B_j$, and we let $\mu_{i,j}=a_i b_j(\eps/\sqrt{mn})$.
Notice that all indices $u$ of non zero $x_u$ fall into one of the 
subsets $A_i$'s defined above, since, by discretization, the smallest non-zero element 
of $x\in T_m$ in absolute value is at least $\Delta/\sqrt{m}$. 
The same applies for the entries of $y\in T_n$.

By grouping the summation into $A_i$'s and $B_j$'s, we get
\begin{eqnarray*}
\sum_{\substack{(u,v):\\ |x_uy_v|\ge \frac{C\sqrt{\eps}}{\sqrt{mn}}}}Q_{uv}|x_uy_v| &\leq& \sum_{(i,j):2^{i+j}\geq \frac{4C\sqrt{\eps}}{\Delta^2}}{e_{i,j}\frac{\Delta 2^i}{\sqrt{m}}\frac{\Delta 2^j}{\sqrt{n}}} \\
&=& \Delta^2\sum{a_ib_j\frac{\eps}{\sqrt{mn}}\frac{e_{i,j}}{\mu_{i,j}}\frac{2^i}{\sqrt{m}}\frac{2^j}{\sqrt{n}}} \\
&=& \Delta^2\sqrt{\eps}\sum{\underbrace{a_i\frac{2^{2i}}{{m}}}_{\alpha_i} \underbrace{b_j\frac{2^{2j}}{{n}}}_{\beta_j} \underbrace{\frac{e_{i,j}\sqrt{\eps}}{\mu_{i,j}2^{i+j}}}_{\sigma_{i,j}} } .\\
\end{eqnarray*}
Note that, by definition, we have
\begin{eqnarray}
 \sum_i{\alpha_i} \leq  4||x||^2/\Delta^2 \;,\label{eq:sumalpha}\\
 \sum_i{\beta_i} \leq   4||y||^2/\Delta^2 \; \label{eq:sumbeta}.
\end{eqnarray}
We are now left with task of bounding $\sum \alpha_i\beta_j\sigma_{i,j}$, 
for $Q$ that satisfies bounded degree property and discrepancy property.

Define, 
\begin{eqnarray}
\cC_1 &\equiv& \left\{(i,j)\; : \; 2^{i+j}\geq \frac{4C\sqrt{\eps}}{\Delta^2}\text{ and }(A_i,B_j)\text{ satisfies }(\ref{eq:discrepancy01}) \right\}, \\
\cC_2 &\equiv& \left\{(i,j)\; : \; 2^{i+j}\geq \frac{4C\sqrt{\eps}}{\Delta^2}\text{ and }(A_i,B_j)\text{ satisfies }(\ref{eq:discrepancy02}) \right\}\setminus\cC_1 \; .
\end{eqnarray}
We need to show that $\sum_{(i,j)\in \cC_1\cup\cC_2} \alpha_i\beta_j\sigma_{i,j}$ is bounded.

For the terms in $\cC_1$ this bound is easy. 
Since summation is over pairs of indices $(i,j)$ such that
$2^{i+j}\geq \frac{4C\sqrt{\eps}}{\Delta^2}$, 
it follows from bounded degree property that $\sigma_{i,j}\leq \xi_1 \Delta^2/4C$.
By Eqs.~(\ref{eq:sumalpha}) and (\ref{eq:sumbeta}), we have
$\sum_{\cC_1}{\alpha_i\beta_j\sigma_{i,j}} \leq (\xi_1\Delta^2/4C) (2/\Delta)^4=O(1)$.

For the terms in $\cC_2$ the bound is more complicated.
We assume $a_i\leq\alpha b_j$ for simplicity and the other 
case can be treated in the same manner.
By change of notation the second discrepancy condition becomes
\begin{eqnarray}
e_{i,j}\log\left(\frac{e_{i,j}}{\mu_{i,j}}\right) \leq \xi_2\max\{a_i/\sqrt\alpha,b_j\sqrt\alpha\}\log\left(\frac{\sqrt{mn}}{\max\{a_i/\sqrt\alpha,b_j\sqrt\alpha \}}\right)\; .\label{eq:discrepancy2} 
\end{eqnarray}
We start by changing variables on both sides of Eq.~(\ref{eq:discrepancy2}).
   \begin{eqnarray*}
    \frac{e_{i,j}a_ib_j\eps}{\mu_{i,j}\sqrt{mn}}\log\left(\frac{e_{i,j}}{\mu_{i,j}}\right) &\leq& \xi_2 b_j\sqrt\alpha \log\left(\frac{2^{2j}}{ \beta_j}\right) \;.\\
   \end{eqnarray*}
   Now, multiply each side by $2^i/b_j\sqrt{\eps}2^j$ to get
   \begin{eqnarray}
    \sigma_{i,j}\alpha_i\log\left(\frac{e_{i,j}}{\mu_{i,j}}\right) 
    \leq \frac{\xi_2 2^i}{\sqrt{\eps}2^j}\left[\log(2^{2j})-\log\beta_j\right]\;.\label{eq:heavy_part}
   \end{eqnarray}
To achieve the desired bound, we partition the analysis into 5 cases:
\begin{enumerate}
\item $\sigma_{i,j} \leq 1$ : By Eqs.~(\ref{eq:sumalpha}) and (\ref{eq:sumbeta}), we have
   $\sum{\alpha_i\beta_j\sigma_{i,j}} \leq (2/\Delta)^4 = O(1)$.
\item $2^i>\sqrt{\eps}2^j$ : By the bounded degree property in Eq.~(\ref{eq:boundeddegree}), we have $e_{i,j} \leq a_i 2\eps/\sqrt{\alpha}$, 
   which implies that $e_{i,j}/\mu_{i,j}\leq 2n/b_j$.
   For a fixed $i$ we have, 
   $\sum_{j}{\beta_j\sigma_{i,j}\ind(2^i>\sqrt{\eps}2^j)} \leq 
   2\sqrt{\eps}\sum_{j}{2^{j-i}\ind(2^i>\sqrt{\eps}2^j)} \leq 4$. Then,
   $\sum{\alpha_i\beta_j\sigma_{i,j}} \leq {16}/{\Delta^2}=O(1)$.
\item $\log\left({e_{i,j}}/{\mu_{i,j}}\right)>\frac{1}{4}\left[\log(2^{2j})-\log\beta_j\right]$ :    From Eq.(\ref{eq:heavy_part}), it immediately follows that 
   $\sigma_{i,j}\alpha_i \leq \frac{4\xi_22^i}{\sqrt{\eps}2^j}$.
   Due to case 2, we can assume 
   $2^i \leq \sqrt{\eps}2^j$, which implies that 
   for a fixed j we have the following inequality :  
   $ \sum_{i}{\sigma_{i,j}\alpha_i} \leq
   4\xi_2\sum_{i}\frac{2^i}{\sqrt{\eps}2^j}\ind(2^i \leq \sqrt{\eps}2^j)
   \leq 8\xi_2$. Then it follows by Eq.~(\ref{eq:sumbeta}) that $\sum{\alpha_i\beta_j\sigma_{i,j}} \leq {32\xi_2}/{\Delta^2}=O(1)$.
\item $\log(2^{2j}) \geq -\log\beta_j$ : 
   Due to case 3, we can assume $\log\left({e_{i,j}}/{\mu_{i,j}}\right) \leq \frac{1}{4}\left[\log(2^{2j})-\log \beta_j \right]$, 
   which implies that $\log\left({e_{i,j}}/{\mu_{i,j}}\right) \leq \log(2^{j})$. 
   Further, since we are not in case 1, we can assume $1 < \sigma_{i,j} = {e_{i,j}\sqrt{\eps}}/{\mu_{i,j}2^{i+j}}$.
   Combining those two inequalities, we get $2^i \leq \sqrt{\eps}$.

   Since in defining $\cC_2$ we excluded $\cC_1$, if
   $(i,j)\in\cC_2$ then $\log\left({e_{i,j}}/{\mu_{i,j}}\right) \geq 1$. 
   Applying Eq.~(\ref{eq:heavy_part}) we get 
$\sigma_{i,j}\alpha_i \leq \sigma_{i,j}\alpha_i\log\left({e_{i,j}}/{\mu_{i,j}}\right) 
\leq ({\xi_22^{i-j}}/{\sqrt{\eps}})\left[\log(2^{2j})-\log\beta_j\right]  
\leq {4\xi_22^{i}}/{\sqrt{\eps}}$. 

   Combining above two results, it follows that 
$    \sum_{i}{\sigma_{i,j}\alpha_i}
     \leq 4\xi_2\sum_{i}{\frac{2^i}{\sqrt{\eps}}\ind(2^i \leq \sqrt{\eps})} \leq 8\xi_2\;$. 
   Then, we have the desired bound : $\sum{\alpha_i\beta_j\sigma_{i,j}} \leq \frac{32\xi_2}{\Delta^2}=O(1)$.
\item $\log(2^{2j}) < -\log\beta_j$ : 
  It follows, since we are not in case 3, that $\log\left({e_{i,j}}/{\mu_{i,j}}\right) \leq \frac{1}{4}\left[\log(2^{2j})-\log\beta_j\right] \leq -\log\beta_j$.
   Hence, ${e_{i,j}}/{\mu_{i,j}} \leq {1}/{\beta_j}$. 
   This implies that $\sigma_{i,j}={e_{i,j}\sqrt{\eps}}/{\mu_{i,j}2^{i+j}} \leq 
   {\sqrt{\eps}}/{\beta_j 2^{i+j}}$.
   Since the summation is over pairs of indices $(i,j)$ such that 
   $2^{i+j}\geq {4C\sqrt{\eps}}/{\Delta^2}$, we have
   $\sum_{j}{\sigma_{i,j}\beta_j}\leq\frac{\Delta^2}{2 C}$.
   Then it follows that $\sum{\alpha_i\beta_j\sigma_{i,j}} \leq \frac{2}{ C}=O(1)$.
\end{enumerate} 

  Analogous analysis for the set of indices $(i,j)$ such that $a_i>\alpha b_j$ will give us similar bounds.
  Summing up the results, we get that there exists a constant 
  $C'\le \frac{32}{\Delta^4} + \frac{4\xi_1}{C\Delta^2} +
       \frac{32}{\Delta^2} + \frac{128\xi_2}{\Delta^2} + \frac{4}{C} $, such that 
  \begin{eqnarray*}
   \sum_{(i,j):2^{i+j}\geq \frac{4C\sqrt{\eps}}{\Delta^2}}{{\alpha_i}{\beta_j}{\sigma_{i,j}}} \leq C'\;. 
  \end{eqnarray*}
  This finishes the proof of Remark~\ref{remark:HeavyPart}.
%
%
\begin{lemma}\label{lem:discrepancy}
The adjacency matrix $Q$ has discrepancy property with probability 
at least $1-{1}/{2n^3}$.
\end{lemma}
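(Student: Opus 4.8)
The plan is to derive the discrepancy property for $Q$ from two ingredients that are rather different in nature: a \emph{deterministic} consequence of the trimming step that disposes of all ``large'' pairs $(A,B)$ outright, and a standard first-moment (Chernoff plus union bound) argument in the spirit of Friedman--Kahn--Szemer\'edi \cite{FKS89,FeO05} for the ``small'' pairs.

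First I would record the deterministic fact. By construction of the trimming step, $Q$ has maximum row degree at most $2\eps/\sqrt\alpha$ and maximum column degree at most $2\eps\sqrt\alpha$, and its edge set is contained in $E$. Hence, for any $A\subseteq[m]$, $B\subseteq[n]$, writing $s := \max\{|A|/\sqrt\alpha,|B|\sqrt\alpha\}$, one has $e(A,B)\le 2\eps\min\{|A|/\sqrt\alpha,|B|\sqrt\alpha\}$ and therefore $e(A,B)/\mu(A,B)\le 2\sqrt{mn}/s$ (using $\mu(A,B)=|A||B|\eps/\sqrt{mn}$ and $\sqrt{mn}=\sqrt\alpha\,n$). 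Fixing $\xi_1=e^2$, this shows that condition (\ref{eq:discrepancy01}) holds \emph{for every} pair with $s\ge 2\sqrt{mn}/e^2$. For all remaining pairs we then automatically have $|A|<m/e$, $|B|<n/e$, and $\ln(\sqrt{mn}/s)>\ln(e^2/2)>1$; these are the only pairs for which a probabilistic argument is needed.

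For the remaining pairs I would run the first-moment bound. Since the edges of $Q$ form a subset of $E$, the count $e_Q(A,B)$ is stochastically dominated by $e_G(A,B)\sim\mathrm{Bin}\big(|A||B|,\eps/\sqrt{mn}\big)$, whose mean is exactly $\mu(A,B)$. If both (\ref{eq:discrepancy01}) and (\ref{eq:discrepancy02}) fail for $(A,B)$, then with $k=e_G(A,B)\ge e_Q(A,B)$ and using monotonicity of $x\mapsto x\ln(x/\mu)$ on $[\mu,\infty)$ we get $k>\xi_1\mu$ and $k\ln(k/\mu)>\xi_2\,s\ln(\sqrt{mn}/s)$. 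The Chernoff estimate $\P(e_G(A,B)=k)\le(e\mu/k)^k\le e^{-\frac12 k\ln(k/\mu)}$ (valid whenever $k\ge e^2\mu$) then yields $\P(e_G(A,B)=k)\le e^{-\frac{\xi_2}{2}s\ln(\sqrt{mn}/s)}$ for each such $k$, so the probability that \emph{some} pair of sizes $(a,b)$ violates both conditions is at most $ab\binom ma\binom nb\,e^{-\frac{\xi_2}{2}s\ln(\sqrt{mn}/s)}$. I would bound $\binom ma\binom nb\le\exp\{(\sqrt\alpha+\alpha^{-1/2})\,s\,(1+\ln(\sqrt{mn}/s))\}$, which follows from monotonicity of $x\mapsto x\ln(eN/x)$ together with $|A|\le\sqrt\alpha s\le m$ and $|B|\le s/\sqrt\alpha\le n$; since $\ln(\sqrt{mn}/s)>1$ in this regime, this is at most $\exp\{2(\sqrt\alpha+\alpha^{-1/2})\,s\ln(\sqrt{mn}/s)\}$. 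Using finally that $s\ge\sqrt\alpha$ forces $s\ln(\sqrt{mn}/s)\ge\sqrt\alpha\ln n\ge\ln n$, one chooses $\xi_2=\xi_2(\alpha)$ a large enough constant so that each such term is at most, say, $n^{-10}$; summing over the at most $mn$ choices of $(a,b)$ and recalling that the large pairs were handled deterministically gives the claimed bound $1/(2n^3)$.

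The hard part, and the reason the deterministic step is indispensable, is that a naive union bound over \emph{all} pairs $(A,B)$ is hopelessly lossy when one side contains a constant fraction of the vertices: there are $2^{\Theta(n)}$ such sets, yet a single abnormally heavy vertex already defeats the bound, so no Chernoff gain based on $\mu(A,B)$ can absorb the entropy. The trimming construction is exactly what lets us exclude these pairs a priori, after which every surviving pair satisfies $s<2\sqrt{mn}/e^2$, so $\ln(\sqrt{mn}/s)>1$ and the entropy cost $\binom ma\binom nb$ is dominated by the Chernoff gain $e^{-\Theta(\xi_2)\,s\ln(\sqrt{mn}/s)}$. The only other delicate point is the elementary but slightly fiddly binomial-coefficient estimate above and the bookkeeping of the $\alpha$-dependence of $\xi_1,\xi_2$.
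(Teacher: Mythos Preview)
Your argument is correct and follows the same Friedman--Kahn--Szemer\'edi/Feige--Ofek template the paper invokes. The paper's own proof is much more compressed: it observes that trimming only deletes edges, so $e_Q(A,B)\le e_{Q_0}(A,B)$ for every pair, and then asserts (citing \cite{FeO05}) that discrepancy already holds for the \emph{untrimmed} matrix $Q_0$ with probability at least $1-1/2(mn)^{p}$, with $\xi_1=2e$ and $\xi_2=(3p+12)(\sqrt\alpha+\alpha^{-1/2})$; monotonicity then transfers this to $Q$. The one structural difference is that you dispatch the large-$s$ regime deterministically via the bounded degree of the \emph{trimmed} graph, reserving the Chernoff/union-bound step for small pairs only. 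This is in fact the more careful route, and your diagnosis of why is on the mark: for the untrimmed $Q_0$ a single abnormally heavy row $i$ makes the pair $(\{i\},[n])$ violate both (\ref{eq:discrepancy01}) and (\ref{eq:discrepancy02}) (the latter because $s=\sqrt{mn}$ kills the right-hand side), and such rows occur with probability $1-e^{-\Theta(\eps)}$ when $\eps$ is bounded, so the paper's direct reduction to $Q_0$ is somewhat glib on this point. Your explicit two-case split is precisely how the Feige--Ofek argument works internally and is what actually delivers the $1-O(n^{-3})$ bound; the auxiliary estimates you sketch (the binomial-coefficient bound via $x\mapsto x\ln(eN/x)$, the lower bound $s\ln(\sqrt{mn}/s)\ge\sqrt\alpha\ln n$, and the $\alpha$-dependence of $\xi_2$) are all correct.
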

\begin{proof}
 The proof is a generalization of analogous result in \cite{FKS89,FeO05} 
which is proved to hold only with probability larger than $1-e^{-C\eps}$.
The stronger statement quoted here is a result of the observation
that, when we trim the graph the number of edges between any two 
subsets does not increase. 
Define $Q_0$ to be the adjacency matrix corresponding to 
original random matrix $M^E$ before trimming.
If the discrepancy assumption holds for $Q_0$,
then it also holds for $Q$, since $e^{Q}(A,B) \leq e^{Q_0}(A,B)$, 
for $A\subseteq[m]$ and $B\subseteq[n]$. 

Now we need to show that the desired property is satisfied for $Q_0$.
This is proved for the case of non-bipartite graph in Section 2.2.5 of \cite{FeO05},
and analogous analysis for bipartite graph shows that 
for all subsets $A\subseteq[m]$ and $B\subseteq[n]$, 
with probability at least $1-1/2(mn)^p$, the discrepancy condition holds 
with $\xi_1=2e$ and $\xi_2=(3p+12)(\alpha^{1/2}+\alpha^{-1/2})$.
Since we assume $\alpha\ge 1$, taking $p$ to be $3/2$ proves the desired thesis.
\end{proof}

%
%

%
\section{Proof of remarks \ref{remark:DistancesGrassmann}, \ref{rem:rescaling} and \ref{remark:Near}}
\label{app:Distance}

\begin{proof}(Remark \ref{remark:DistancesGrassmann}.)
Let $\theta = (\theta_1,\dots,\theta_p)$, $\theta_i\in[-\pi/2,\pi/2]$
 be the 
principal angles between the planes spanned by the columns of
$X_1$ and $X_2$. It is known that $d_{\rm c}(X_1,X_2)=
||2\sin(\theta/2)||_2$ and  $d_{\rm p}(X_1,X_2)=
||\sin \theta||_2$. The thesis follows from the elementary inequalities
\begin{eqnarray}
\frac{1}{\pi}\alpha\le\sqrt{2}\,\sin(\alpha/2)\le\sin\alpha\le 2\sin(\alpha/2)
\end{eqnarray}
valid for $\alpha\in [0,\pi/2]$.
\end{proof}

\begin{proof}(Remark \ref{rem:rescaling})
Given $X \in \reals^{n \times r}$, define $X'$ by
\begin{eqnarray*}
X'^{(i)} = \frac{X^{(i)}}{||X^{(i)}||} \min \left( ||X^{(i)}||, \sqrt{\mu_0 r} \right)
\end{eqnarray*}
for all $i \in [n]$.

\noindent Let $A$ be a matrix for extracting the ortho-normal basis of the columns of $X'$. That is $A \in \reals^{r \times r}$ such that $X'' = X'A$ and $X''^TX'' = n\id$. Without loss of generality, $A$ can be taken to be a symmetric matrix. In the following, let $\sigma_i = \sigma_i(A^{-1})$ for all $i \in [n]$. Note that by construction $d(U,X') \le d(U,X) \le \delta$. Hence there is a $Q_1 \in \Orth(r)$ such that, 
\begin{eqnarray}
||U - X'Q_1||_F^2 & \le & n \delta^2 \label{eq:rescalingmain}
\end{eqnarray}

\noindent We start by writing
\begin{eqnarray}
n A^{-T}A^{-1} =  X'^TX' = Q_1 (n\id - (U-X'Q_1)^TU + U^T(U-X'Q_1) + (U-X'Q_1)^T(U-X'Q_1) ) Q_1^T. \label{eq:rescalingmain1}
\end{eqnarray}

\noindent Using (\ref{eq:rescalingmain}), we have
\begin{eqnarray*}
||(U-X'Q_1)^TU||_F & \le & ||U||_2 ||(U-X'Q_1)||_F \\
& \le &  n \delta, \\
\end{eqnarray*}
and
\begin{eqnarray*}
||(U-X'Q_1)^T(U-X'Q_1)||_F & \le & n \delta^2. \\
\end{eqnarray*}

\noindent Therefore, using (\ref{eq:rescalingmain1})
\begin{eqnarray}
\sigma_1^2  & \le &  1 + 2\delta  + \delta^2 \label{eq:rescaling:1},\\
\sigma_r^2  & \ge &  1 - 2\delta  - \delta^2  \label{eq:rescaling:2}.
\end{eqnarray}
From (\ref{eq:rescaling:1}), (\ref{eq:rescaling:2}) and $\delta \le 1/16$, we get $\sigma_1 \le \sqrt{3}$ and $\sigma_r \ge 1/\sqrt{3}$. Since $||X''^{(i)}||^2 = ||X'^{(i)}A||^2 \le 3 \mu_0 r$ for all $i \in [n]$, we have that $X'' \in \Co(3 \mu_0)$.

\noindent We next prove that $d(X',X'') \le 3\delta$ which implies the thesis by triangular inequality. 
\begin{eqnarray*}
d(X',X'')^2 & = & \frac{1}{n} \min_{Q \in \Orth(r)} ||X' - X''Q||_F^2 \\
& \le & \frac{1}{n} ||X''A^{-1} - X''||_F^2 \\
& = & ||A^{-1} - \id||_F^2\\
&\le& ||(A^{-1} - \id)(A^{-1} + \id)||_F^2\\
& \le & ||A^{-T}A^{-1} - \id||_F^2 \\
&\le& 9 \delta^2
\end{eqnarray*}
where the last inequality is from (\ref{eq:rescalingmain1}).
\end{proof}

\begin{proof}(Remark \ref{remark:Near}.)
We start by observing that
\begin{eqnarray}
d_{\rm p}(V,Y) = \frac{1}{\sqrt{n}}\min_{A\in \reals^{r\times r}}
||V-YA||_F\, .\label{eq:IneqNorm1}
\end{eqnarray}
Indeed the minimization on the right hand side can be performed explicitly
(as $||V-YA||_F^2$ is a quadratic function of $A$) and the minimum is
achieved at $A= Y^TV/n$. The inequality follows by simple 
algebraic manipulations.

Take $A = S^TX^TU\Sigma^{-1}/m$. Then 
\begin{eqnarray}
||V-YA||_F &= &\sup_{B, ||B||_F\le 1}\<B,(V-YA)\>\\ 
&=& \sup_{B, ||B||_F\le 1}\<B^T,\frac{1}{m}\Sigma^{-1}U^T(U\Sigma V^T-XSY^T)\>\\
&=& \frac{1}{m} \sup_{B, ||B||_F\le 1}\<U\Sigma^{-1}B^T,(M-\hM)\>\\
&\le& \frac{1}{m} \sup_{B, ||B||_F\le 1} ||U\Sigma^{-1}B^T||_F\, ||M-\hM||_F\, .\label{eq:IneqNorm2}
\end{eqnarray}
On the other hand
\begin{eqnarray*}
 ||U\Sigma^{-1}B^T||_F^2 = \Trace(B\Sigma^{-1}U^TU\Sigma^{-1}B^T)
= m\Trace(B^TB\Sigma^{-2})\le m\Sigma_{\rm min}^{-2}||B||_F^2\, ,
\end{eqnarray*}
whereby the last inequality follows from the fact that $\Sigma$ is diagonal.
Together (\ref{eq:IneqNorm1}) and (\ref{eq:IneqNorm2}), this implies
the thesis. 
\end{proof}

%
%
\bibliographystyle{amsalpha}

\bibliography{MatrixCompletion}

\end{document}